\documentclass{article}

\usepackage{arxiv}
\usepackage{graphicx}
\usepackage[utf8]{inputenc} 
\usepackage[T1]{fontenc}    
\usepackage{url}            
\usepackage{booktabs}       
\usepackage{amsfonts}       
\usepackage{nicefrac}       
\usepackage{microtype}      
\usepackage{lipsum}
\graphicspath{ {./images/} }
\usepackage[bookmarks=false]{hyperref}
\usepackage{tabularx}
\usepackage{amsmath}
\usepackage{algorithm}
\usepackage{algorithmic}
\usepackage{subfigure}
\usepackage{multirow}
\usepackage{amssymb}
\usepackage{booktabs}
\usepackage{amsthm}

\newtheorem{theorem}{Theorem}
\newtheorem{definition}{Definition}
\newtheorem{proposition}{Proposition}
\newtheorem{lemma}{Lemma}

\title{DWMD: Dimensional Weighted Orderwise Moment Discrepancy for Domain-specific Hidden Representation Matching}

\author{
 Rongzhe Wei \\
  School of Mathematics and Statistics\\
  Xi'an Jiaotong University\\
  Xi'an, China, 710049 \\
  \texttt{jessonwrz@163.com} \\
   \And
 Fa Zhang \\
  School of Computer Science and Technology\\
  Xi'an Jiaotong University\\
  Xi'an, China, 710049 \\
  \texttt{fazhang@stu.xjtu.edu.cn} \\
  \And
 Bo Dong \\
  School of Computer Science and Technology\\
  Xi'an Jiaotong University\\
  Xi'an, China, 710049 \\
  \texttt{dong.bo@mail.xjtu.edu.cn} \\
  \And
 Qinghua Zheng \\
  School of Computer Science and Technology\\
  Xi'an Jiaotong University\\
  Xi'an, China, 710049 \\
  \texttt{qhzheng@mail.xjtu.edu.cn} \\
 }

\begin{document}
\maketitle
\begin{abstract}
Knowledge transfer from a source domain to a different but semantically related target domain has long been an important topic in the context of unsupervised domain adaptation (UDA). A key challenge in this field is establishing a metric that can exactly measure the data distribution discrepancy between two homogeneous domains and adopt it in distribution alignment, especially in the matching of feature representations in the hidden activation space. Existing distribution matching approaches can be interpreted as failing to either explicitly orderwise align higher-order moments or satisfy the prerequisite of certain assumptions in practical uses. We propose a novel moment-based probability distribution metric termed dimensional weighted orderwise moment discrepancy (DWMD) for feature representation matching in the UDA scenario. Our metric function takes advantage of a series for high-order moment alignment, and we theoretically prove that our DWMD metric function is error-free, which means that it can strictly reflect the distribution differences between domains and is valid without any feature distribution assumption. In addition, since the discrepancies between probability distributions in each feature dimension are different, dimensional weighting is considered in our function. We further calculate the error bound of the empirical estimate of the DWMD metric in practical applications. Comprehensive experiments on benchmark datasets illustrate that our method yields state-of-the-art distribution metrics. 
\end{abstract}


\section{Introduction}
In the era of big data, preprocessing and labeling large quantities of collected data is cost-inefficient and consumes considerable human resources ~\cite{zellinger2017central}. These, in turn, limit the training of deep networks since the training stages require labeled data as prior knowledge for supervised learning. However, if we directly adopt the already available classifier models pretrained on labeled datasets and apply them to unlabeled and differently distributed data, the performance will dramatically degrade. Thus, leveraging the knowledge from one data distribution, namely, the source domain, with sufficiently labeled, model pretrained samples to establish a well-performed classifier model on a different but semantically related target data distribution with samples left unannotated is the fundamental goal of UDA~\cite{pan2009survey} ~\cite{zhuang2019comprehensive}. 

Previous shallow domain adaptation (DA) or transfer learning (TL) methods under unsupervised settings have shown that bridging the source and target domains through learning domain-invariant hidden feature representations is promising ~\cite{huang2007correcting} ~\cite{pan2010domain} ~\cite{gong2013connecting}. However, with the prevalence of deep networks in recent years, studies have shown that more and stronger transferable features in the hidden activation space can be obtained by embedding UDA in the pipeline of deep learning ~\cite{tzeng2014deep} ~\cite{long2015learning} ~\cite{sun2016deep} ~\cite{ganin2017domain}. In this case, a series of most common and successful practices develop a metric between deep networks' domain-specific hidden activations, and by minimizing this metric distance, reduce distribution differences between source and target domains to maximize the similarity of two domain hidden representations. 

\begin{figure*}[htb]
  \centering
  \includegraphics[width=\linewidth]{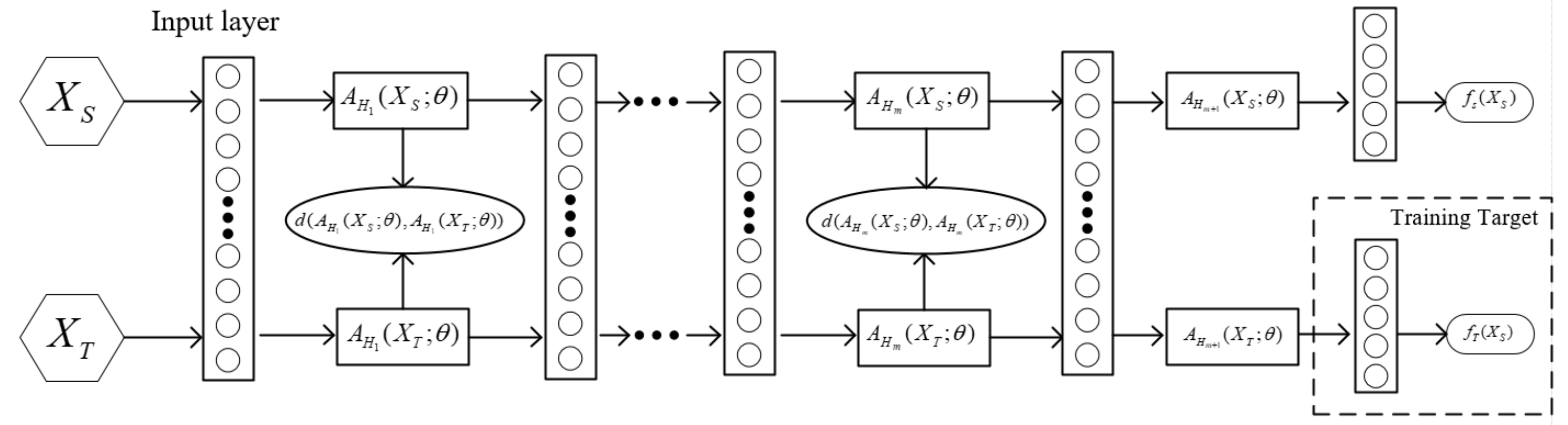}
  \caption{Schematic framework of multi-layer hidden representation matching in deep networks. The discrepancy of hidden activations of both source and target domains in each hidden layer are measured by domain regularizer $d$ whose gradient is added to the objective in the backpropagation training.}
  \label{fig:fig1}
\end{figure*}

Many outstanding measurements have been proposed, which can be divided into two categories: nonmoment-based and moment-based metrics. As a typical representative of nonmoment-based metrics, Proxy $\mathcal{A-}distance$ was proposed by Ben-David et al., defined by $\hat{d}_{\mathcal{A}} = 2 \left( 1 - 2 \epsilon \right)$. $\epsilon$ in the distance function is the generalization error of discriminating the samples from source and target domains. Later, the well-known domain adversarial neural networks (DANN) proposed in ~\cite{ganin2017domain} calculate the same $\epsilon$ value with a neural network classifier trained simultaneously with a feature extraction network by means of a gradient reversal layer (GRL).

Many of the widely used distribution difference metrics can be interpreted as matching statistical moments. The Kullback-Leibler (KL) divergence, proposed by Kullback \& Leibler (1951), was applied to the activation space in deep networks in ~\cite{zhuang2015supervised}, which can be viewed as the mean (first raw moment) matching of hidden representations. In addition, KL divergence, by its definition, is not a strictly defined distribution metric in mathematics since it violates triangle inequality and symmetry properties, and the two probability density functions used for the KL divergence calculation are sometimes unattainable in real scenarios. In 2006, a kernel-based distribution measurement termed the maximum mean discrepancy (MMD) was proposed in ~\cite{gretton2007kernel}, which has been widely applied in the field of DA. MMD can be simplified and computed using kernel trick, where the distance of instance mean values is calculated in a reproducing kernel Hilbert space (RKHS). ~\cite{gretton2012optimal} took advantage of multiple kernels and proposed a multikernel version of MMD, i.e., MK-MMD. Later, ~\cite{yan2017mind} proposed a weight version of MMD aiming to address the class weight bias issue. All these measurements are MMD-based, and many of them have been introduced in hidden representation matching tasks. However, according to the work of ~\cite{li2015generative}, most MMD-based methods with Gaussian kernels can be viewed as minimizing distances between weighted sums of all raw moments by using Taylor expansion, which does not take in the explicit high-order moment matching form.

In 2017, central moment discrepancy (CMD) for hidden representation matching tasks was proposed in ~\cite{zellinger2017central}. The CMD method is a direct high-order central moment alignment metric that yields state-of-the-art MMD-based approaches on nearly all benchmark datasets. The proposed CMD metric is only valid under the assumption that the source and target data are distributed within compact intervals, which are finite closed intervals in finite-dimensional Euclidean space. However, the prerequisite of this assumption can hardly be satisfied in real-world scenarios, since the values of hidden activation features in each dimension can extend to infinity under many activation functions, such as ReLU, Leaky ReLU, etc. Therefore, the assumption of this distribution metric has narrowed the selection of activation functions for hidden representation matching by only using bounded functions such as the sigmoid function, which can slow down the training speed in deep networks.

To address the aforementioned shortcomings, we propose a novel distribution discrepancy metric termed dimensional weighted orderwise moment discrepancy (DWMD). Our metric function, taken in the form of a series, explicitly orderwise aligns high-order moments. We theoretically prove that our function is strictly a metric by its mathematical definition, and our measurement is valid without any distribution assumption given, which can be applied to any activation functions in deep networks. Furthermore, we consider the degree of discrepancy in each dimension of hidden representations, and dimensional weighting is added to the total metric. If each term of the series can be considered a measure of the vertical difference between two domains with respect to each moment order, then the role of the dimensional weighted vector is to horizontally weight each feature dimension to reflect the discrepancy. In addition, the error bound of our metric when using its empirical estimate is given.

In experiments, we selected \emph{Office-31}, \emph{ImageCLEF-DA}, \emph{Office-10} as benchmark datasets, and comprehensive results demonstrate the superiority of our method compared with state-of-the-art metrics in hidden representation matching tasks.

The reminder of this paper is structured as follows. A brief overview of hidden representation matching and the corresponding notations are provided in section 2. In section 3, we propose our DWMD method included with theoretical analysis. Thereafter, we describe the experimental setup and results in section 4. Detailed discussions on specific topics with respect to the DWMD metric are illustrated in section 5. Section 6 presents the conclusion of our work.

\section{Hidden Representation Matching and Notations}
In this section, we present a brief overview of hidden representation matching and introduce the notations used in later sections.

Let $\mathcal{X} = \mathbb{R}^{d}$ and $\mathcal{Y} = \left[ 0, 1\right]^{|C|}$ denote the feature space and label space respectively. The feature space is d-dimensional, and $|C|$ is the cardinality of all categories $C$. In this paper, we mainly deal with the homogeneous UDA setting, where both source and target data share the same $\mathcal{X}$ and $\mathcal{Y}$. The source domain $\mathcal{D}_{S}$ and the target domain $\mathcal{D}_{T}$ are two different but semantically related distributions over $\mathcal{X} \times \mathcal{Y}$. The source domain samples $S = \left\{ X_S, Y_S \right\} = \left\{ \left( x_i, y_i \right) \right\}_{i = 1}^{n}$ are all annotated, while the target domain samples $T = \left\{ X_T \right\} = \left\{ x_j^{*}  \right\}_{j = 1}^{m}$ are left unlabeled. 

As is shown in Fig ~\ref{fig:fig1}, in the UDA scenario, the model classifier $f_S: \mathcal{X} \rightarrow \mathcal{Y}$ for source samples has already been trained with parameters $\theta_S$, and our main goal is to obtain a well-performing classifier $f_T: \mathcal{X} \rightarrow \mathcal{Y}$, namely, to obtain its parameters $\theta_T$ in parameter space $\Theta$, by minimizing its target risk $R_T \left( f \right) = Pr \left( f_T \left( x^{*} \right) \neq y^{*} \right)$, where $y^{*}$ is the actual label of sample $x^{*}$. 

For each hidden layer $\mathcal{H}_{i}$, both the source and target hidden activations are denoted as $A_{H_i} \left( X_S; \theta \right)$ and $A_{H_i} \left( X_T; \theta \right)$ with activation functions $g_{H_i}$. If $\mathcal{H}_{i}$ has $n_i$ hidden nodes, then $A_{H_i} \left( X_S; \theta \right)$, $A_{H_i} \left( X_T; \theta \right) \in \mathbb{R}^{n_i}$. In the hidden representation matching task, a distribution discrepancy metric or so-called a domain regularizer $d$ is in added. The domain regularizer $d: \left( \mathbb{R}^{n_i} \right)^{n} \times \left( \mathbb{R}^{n_i} \right)^{m} \rightarrow \mathbb{R}_{+}$ is used for the training of parameters $\theta_T$. Since there is no annotation for target data, the empirical estimation of source domain loss under parameters $\theta \in \Theta$ is utilized. Let $\mathbb{E} \left( l \left( X_S, Y_S, \theta \right)\right)$ define the empirical estimation and $l: \mathcal{X} \times \mathcal{Y} \times \Theta$ is the selected loss function, e.g. cross-entropy function. In addition, $\lambda$ is the penalty parameter of domain regularizer $d$.

The total objective for obtaining $\theta_T \in \Theta$ is to minimize:
$$\mathbb{E} \left( l \left( X_S, Y_S, \theta_T \right)\right) + \lambda \cdot d \left( A_{H} \left( X_S; \theta_T \right), A_{H} \left( X_T; \theta_T \right) \right)$$ 

In the network backpropagation training, the gradient is written as:
$$\nabla_{\theta_T} \mathbb{E} \left( l \left( X_S, Y_S, \theta_T \right)\right) + \lambda \cdot \nabla_{\theta_T} d$$

\section{Methodology}
In this section, we first introduce the definition of our new distribution distance metric termed DWMD.  Then, we present the calculation of the dimensional weighted vector ($\tau \left( X_S, X_T \right)$) in the metric, which measures the averaging discrepancy of feature spaces in each dimension between the source and target domains. The supporting theorems and propositions are given after, followed by the proof of our probability discrepancy measure. Finally, the domain regularizers for hidden activation matching are defined by the empirical estimate of our DWMD.
\subsection{Dimensional Weighted Orderwise Moment Discrepancy (DWMD)}
To realize accurate measurement of the feature distribution discrepancy between two similar but different domains, we take advantage of the moment generating function to construct high order moment alignment metric. This metric is an expansion of moment distance given in the form of series. For each order, the DWMD function is given as follows:
\begin{definition}
(DWMD metric) Considering an n-dimensional Euclidean space, $X_S$ and $X_T$ are independent and identically distributed random vectors from the source domain $D_S$ and target domain $D_T$ respectively. The dimensional weighted orderwise moment discrepancy (DWMD) function is defined by\\
\begin{equation*}
\mathcal{D}_{new} \left( \mathcal{D}_S || \mathcal{D}_T \right) = \sum_{n = 1}^{\infty} e^{\omega} \odot \frac{| \mathbb{E}X_{S}^{n} - \mathbb{E}X_{T}^{n} |^{\beta}}{C + | \mathbb{E}X_{S}^{n} - \mathbb{E}X_{T}^{n}|^{\beta}}
\end{equation*}
\begin{equation*}
\omega = -\frac{\psi n}{\tilde{\tau} \left( X_S, X_T \right)}
\end{equation*}
\end{definition}
 \quad \\
where $\mathbb{E}X^{n}$ is the $n^{th}$ raw moment of random variable $X$. Note that sometimes for convenience, we will use $n$ to denote the number of items in the DWMD function. Parameter $\beta \leq 1$. $C$ and $\psi$ are positive constants. We suggest $C \sim \mathcal{O}\left( \mathbb{E}X_S - \mathbb{E}X_T \right)$ for the reason that if $C$ is large or small enough then we will have:
$$\frac{| \mathbb{E}X_{S}^{n} - \mathbb{E}X_{T}^{n} |^{\beta}}{C + | \mathbb{E}X_{S}^{n} - \mathbb{E}X_{T}^{n}|^{\beta}} \rightarrow 0 \text{  or  }1$$
Thus, we recommend to just simply set $C = \tau^{1} \left( X_S, X_T \right)$, which can be obtained from dimensional weighted vector without extra calculation. $\tilde{\tau} \left( X_S, X_T \right)$ represents the dimensional weighted vector measuring the mean discrepancy of feature space in each dimension. Its detailed calculation is given in the next subsection.
\subsection{Dimensional Weighted Vector Calculation}
To evaluate the discrepancies between the source and target domains in each dimension of the feature space, we introduced a dimensional weighted vector $\tau \left( X_S, X_T \right) \in \mathbb{R}^{d}$, which plays a key role in horizontal discrepancy reflection.  The details are presented as follows:

As given in section 2, the feature space $\mathcal{X}$ is d-dimensional, and considered in the $k^{th}$ dimension. Since both $X_S$ and $X_T$ can have anomalous samples in that dimension, so we apply a commonly used one-class anomaly detection method, \textbf{One Class SVM} (with outlier parameter $\alpha = 0.1$), on both two domain samples, and compute the mean value of the normal samples in the $k^{th}$ dimension respectively. The mean values are noted as $h_S^{\left( k \right)}$ and $h_T^{\left( k \right)}$. Then we have the value of the dimensional weighted vector in the $k^{th}$ dimension $\tau^{k} \left( X_S, X_T \right)$.
$$\tau^{k} \left( X_S, X_T \right) = |h_S^{\left( k \right)} - h_T^{\left( k \right)}|$$
The dimensional weighted vector for horizontal dimensional weighting is defined by:
\begin{align*}
\tau \left( X_S, X_T \right) &= \left(\tau^{1} \left( X_S, X_T \right) , ..., \tau^{d} \left( X_S, X_T \right)\right)^{T} \\ &= \left( |h_S^{\left( 1 \right)} - h_T^{\left( 1 \right)}|, ..., |h_S^{\left( d \right)} - h_T^{\left( d \right)}| \right)^{T}
\end{align*}
Thus, the normalized dimensional weighted vector is:
$$\tilde{\tau} \left( X_S, X_T \right) = \frac{\tau \left( X_S, X_T \right)}{\max \limits_{k} \left\{ \tau^{k} \left( X_S, X_T \right) \right\}} = \frac{\tau \left( X_S, X_T \right)}{\tau_{max}}$$

\subsection{Analysis}
In this section, the supporting theorems and propositions related to our metric function are given. Then, we present the proof of our probability discrepancy measure. The following theorem shows the relationship between the distribution function and every order moment, which is also the starting point for our metric function.
\begin{theorem}
(Carleman Theorem) For feature space $H$, $X$ is a random variable that obeys distribution $P$ ($X \sim P$). If all raw moments of $X$ exist , that is $\left\{ \mu_{k} = \mathbb{E} \left[ X^{k} \right] | \mu_{k} < \infty; k = 1, 2, 3, ..., n, ...\right\}$, and satisfy Carleman's Condition:
$$\sum_{n = 1}^{\infty} \frac{1}{\left( \mu_{2n}\right)^{\frac{1}{2n}}} = \infty$$
then the distribution $P$ is uniquely determined by all raw moments.
\end{theorem}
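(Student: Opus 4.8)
My plan is to prove this via the classical route for the Hamburger moment problem: reduce determinacy to a \emph{quasi-analyticity} statement about characteristic functions. Suppose $P$ and $Q$ are two distributions on $H=\mathbb{R}$ (the scalar case; the multivariate case, if needed, follows by the Cram\'er--Wold device) that share the moment sequence $\{\mu_k\}_{k\ge 1}$ satisfying Carleman's condition. Since a probability distribution is uniquely determined by its characteristic function, it suffices to show $\varphi_P=\varphi_Q$, where $\varphi_P(t)=\mathbb{E}_P\!\left[e^{itX}\right]$. Because every moment of $P$ is finite, $\varphi_P\in C^{\infty}(\mathbb{R})$ with $\varphi_P^{(k)}(0)=i^{k}\mu_k$, and likewise for $Q$; hence $h:=\varphi_P-\varphi_Q$ is a $C^{\infty}$ function all of whose derivatives vanish at $t=0$.

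The first technical step is a uniform derivative bound. Differentiating under the expectation gives $|\varphi_P^{(k)}(t)|\le \mathbb{E}_P|X|^{k}$ for every $t$, and by Lyapunov's inequality $\mathbb{E}_P|X|^{k}\le \mu_{2\lceil k/2\rceil}^{\,k/(2\lceil k/2\rceil)}\le M_k$ with $M_k:=\max\{1,\mu_{2\lceil k/2\rceil}\}$; since $Q$ has the same moments, the identical bound holds for $\varphi_Q$. Hence $|h^{(k)}(t)|\le 2M_k$ for all $t$ and all $k$, i.e.\ $h$ belongs to the Carleman class $C\{M_k\}$ (after passing, if one wishes, to the log-convex minorant of $M_k$).

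The decisive ingredient is the Denjoy--Carleman theorem: the class $C\{M_k\}$ is quasi-analytic --- its only member vanishing to infinite order at a point is the zero function --- exactly when $\sum_{k}M_k^{-1/k}=\infty$. I would therefore check that Carleman's hypothesis $\sum_{n}\mu_{2n}^{-1/(2n)}=\infty$ forces $\sum_{k}M_k^{-1/k}=\infty$; this is a short comparison, since $M_{2n-1}=M_{2n}$ and $M_{2n}^{-1/(2n)}\ge\min\{1,\mu_{2n}^{-1/(2n)}\}$, together with the standard fact that the Carleman sum is insensitive to log-convex regularization. Granting quasi-analyticity, $h\in C\{M_k\}$ together with $h^{(k)}(0)=0$ for all $k$ forces $h\equiv 0$, hence $\varphi_P\equiv\varphi_Q$ and $P=Q$; thus $P$ is the unique distribution with the prescribed moments.

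The main obstacle is the Denjoy--Carleman theorem itself, which is the genuinely deep analytic input: its proof is complex-analytic --- built on the divergence of the logarithmic integral $\int \log|\widehat{h}(\xi)|/(1+\xi^{2})\,d\xi$ together with a Phragm\'en--Lindel\"of argument --- rather than elementary, so if one declines to cite it as a black box one must reproduce it. An alternative that avoids Denjoy--Carleman is the operator-theoretic proof: the moments generate a Jacobi (tridiagonal) matrix via the three-term recurrence of the orthonormal polynomials, Carleman's condition yields $\sum_n a_n^{-1}=\infty$ for the off-diagonal entries $a_n$, this implies the matrix is essentially self-adjoint, and essential self-adjointness is equivalent to determinacy of the moment problem. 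In either approach the only delicate bookkeeping is translating the stated even-moment condition into the precise hypothesis of the cited theorem; the reduction to characteristic functions and the derivative estimates are routine.
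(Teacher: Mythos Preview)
The paper does not supply a proof of this theorem at all: Carleman's theorem is stated as a classical result and used as background for the construction of the DWMD metric, with no argument given. Your proposal therefore goes well beyond what the paper offers. The route you sketch --- bound $|\varphi^{(k)}(t)|$ by even moments, place $h=\varphi_P-\varphi_Q$ in a Carleman class $C\{M_k\}$, and invoke Denjoy--Carleman quasi-analyticity to conclude $h\equiv 0$ from $h^{(k)}(0)=0$ for all $k$ --- is indeed one of the standard proofs of Carleman's determinacy criterion, and the alternative via essential self-adjointness of the associated Jacobi operator is the other canonical approach. Your identification of the only nontrivial bookkeeping step (translating $\sum_n \mu_{2n}^{-1/(2n)}=\infty$ into the divergence condition required by Denjoy--Carleman, using Lyapunov and log-convex regularization) is accurate, and your acknowledgment that Denjoy--Carleman itself is the deep input is exactly right. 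In short: your argument is correct and complete modulo the cited black box; the paper simply quotes the theorem without proof.
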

The moment generating function of $X$ is defined as $M_X\left( t \right) = \mathbb{E} \left[ e^{tX} \right]$. $M_X\left( t \right) = \mathbb{E} \left[ e^{tX} \right]$ is always valid when $t = 0$, and for nearly all distributions, $M_X\left( t \right) = \mathbb{E} \left[ e^{tX} \right]$ is well-defined within a small neighborhood of $t = 0$, which is $ t \in \left( -\epsilon, \epsilon \right)$.
\begin{proposition}
If $M_X\left( t \right)$ is well-defined on interval $\left( -\epsilon, \epsilon \right)$, then $M_X\left( t \right)$ has every order derivative:
$$M_X^{\left( k \right)} \left( t \right) = \mathbb{E} \left[ X^{k}e^{tX} \right] \Rightarrow M_X^{\left( k \right)} \left( 0 \right) = \mathbb{E} \left[ X^{k} \right]$$
\end{proposition}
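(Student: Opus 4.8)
This is the standard fact that a moment generating function is $C^{\infty}$ on the interior of its region of finiteness, with derivatives obtained by differentiating under the expectation sign; the whole argument reduces to one application of differentiation under the integral (equivalently, the dominated convergence theorem) followed by an induction on $k$. The plan is: (i) show $M_X$ is differentiable on $(-\epsilon,\epsilon)$ with $M_X'(t)=\mathbb{E}[Xe^{tX}]$; (ii) iterate to obtain all orders; (iii) specialize to $t=0$.

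For step (i), I would fix $t_0\in(-\epsilon,\epsilon)$ and choose $\delta>0$ with $[t_0-2\delta,\,t_0+2\delta]\subset(-\epsilon,\epsilon)$. For $0<|h|\le\delta$, the mean value theorem applied to $s\mapsto e^{sX}$ writes the difference quotient $\big(e^{(t_0+h)X}-e^{t_0X}\big)/h$ as $Xe^{\xi X}$ for some $\xi$ between $t_0$ and $t_0+h$, hence $\xi\in[t_0-\delta,\,t_0+\delta]$. The crucial estimate is the elementary inequality $|x|\le C_\delta\big(e^{\delta x}+e^{-\delta x}\big)$ valid for all real $x$ (polynomial growth is swallowed by any exponential), which gives $|Xe^{\xi X}|\le C_\delta\big(e^{(\xi+\delta)X}+e^{(\xi-\delta)X}\big)$; since $\xi\pm\delta\in[t_0-2\delta,\,t_0+2\delta]$ and, for each fixed real value of $X$, $e^{aX}\le e^{(t_0-2\delta)X}+e^{(t_0+2\delta)X}$ whenever $a\in[t_0-2\delta,\,t_0+2\delta]$, all of these difference quotients are dominated by the single function $\Phi(X)=2C_\delta\big(e^{(t_0-2\delta)X}+e^{(t_0+2\delta)X}\big)$, which is integrable because $\mathbb{E}[\Phi(X)]=2C_\delta\big(M_X(t_0-2\delta)+M_X(t_0+2\delta)\big)<\infty$ by hypothesis. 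Dominated convergence then yields $M_X'(t_0)=\mathbb{E}[Xe^{t_0X}]$, and since $t_0$ was arbitrary this holds on all of $(-\epsilon,\epsilon)$.

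For step (ii), I would induct: assuming $M_X^{(k)}(t)=\mathbb{E}[X^{k}e^{tX}]$ on $(-\epsilon,\epsilon)$, the identical argument with $e^{sX}$ replaced by $X^{k}e^{sX}$ and the bound $|x|^{k+1}\le C_{k,\delta}\big(e^{\delta x}+e^{-\delta x}\big)$ produces the same type of $t$-independent integrable majorant, giving $M_X^{(k+1)}(t)=\mathbb{E}[X^{k+1}e^{tX}]$. Finally, setting $t=0$ in $M_X^{(k)}(t)=\mathbb{E}[X^{k}e^{tX}]$ gives $M_X^{(k)}(0)=\mathbb{E}[X^{k}]$, which is exactly $\mu_k$.

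The only real obstacle is step (i): exhibiting one integrable majorant that works uniformly for all $t$ in a neighborhood of $t_0$. The device is to spend the slack available in the \emph{open} interval $(-\epsilon,\epsilon)$ — a slightly widened neighborhood still lies inside — in order to absorb the extra polynomial factor $|X|^{k}$ into a marginally larger exponential whose expectation remains finite. Once this majorant is in hand, the interchange of derivative and expectation and the subsequent induction are routine.
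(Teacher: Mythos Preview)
Your argument is correct and is the standard textbook proof: build a $t$-independent integrable majorant using the slack in the open interval, apply dominated convergence to push the derivative inside the expectation, and induct on $k$. Each step is sound, including the bound $|x|^{k+1}\le C_{k,\delta}(e^{\delta x}+e^{-\delta x})$ and the monotonicity argument that reduces $e^{aX}$ for $a\in[t_0-2\delta,t_0+2\delta]$ to the two endpoint exponentials.

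The paper, however, does not prove this proposition at all; it is simply stated as a known property of moment generating functions and then used. So there is nothing to compare against: your proof supplies what the paper omits, and it does so in the canonical way.
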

\begin{theorem}
If $M_{X} \left( t \right)$ is well-defined on interval $\left( -\epsilon, \epsilon \right)$, then $M_{X} \left( t \right)$ uniquely determines the distribution function of random variable $X$.
\end{theorem}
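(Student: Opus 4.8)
The plan is to chain together the two results already in hand: the preceding Proposition, which says that an MGF finite on a neighborhood of the origin is infinitely differentiable there with $M_X^{(k)}(0) = \mathbb{E}[X^k]$, and Theorem~1 (Carleman's Theorem), which says that a moment sequence satisfying Carleman's condition determines the distribution uniquely. Since $M_X$ restricted to $(-\epsilon,\epsilon)$ recovers the entire raw moment sequence $\{\mu_k = \mathbb{E}[X^k]\}$ through its derivatives at $0$, it suffices to show that the hypothesis ``$M_X$ well-defined on $(-\epsilon,\epsilon)$'' forces $\{\mu_k\}$ to satisfy Carleman's condition $\sum_{n=1}^{\infty} (\mu_{2n})^{-1/(2n)} = \infty$. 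That verification is the one genuinely new step; everything else is assembly.

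The core estimate is a bound on moment growth. Fix any $s \in (0,\epsilon)$. Pointwise one has $e^{s|X|} \le e^{sX} + e^{-sX}$, so taking expectations gives $K := \mathbb{E}[e^{s|X|}] \le M_X(s) + M_X(-s) < \infty$. Comparing $e^{s|X|}$ term-by-term with the nonnegative terms of its Taylor series yields $e^{s|X|} \ge s^{2n}|X|^{2n}/(2n)!$, hence $\mu_{2n} = \mathbb{E}[X^{2n}] = \mathbb{E}[|X|^{2n}] \le (2n)!\,K / s^{2n}$. Taking $2n$-th roots, $(\mu_{2n})^{1/(2n)} \le \bigl((2n)!\bigr)^{1/(2n)} K^{1/(2n)} / s$, and by Stirling's formula $\bigl((2n)!\bigr)^{1/(2n)} = O(n)$ while $K^{1/(2n)} \to 1$, so $(\mu_{2n})^{1/(2n)} \le C' n$ for some constant $C'$ and all large $n$. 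Therefore $\sum_n (\mu_{2n})^{-1/(2n)} \ge \sum_n 1/(C' n) = \infty$, which is exactly Carleman's condition. Invoking Theorem~1, the moment sequence determines the distribution $P$; since $M_X$ determines that sequence, $M_X$ determines the distribution function of $X$.

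I expect the main obstacle to be the moment-growth step, and within it two subtleties worth care. First, the two-sidedness of $(-\epsilon,\epsilon)$ must actually be used: it enters precisely in bounding $\mathbb{E}[e^{s|X|}]$ by both $M_X(s)$ and $M_X(-s)$, and a one-sided MGF would not give this control (nor would it, in general, determine the distribution). Second, the asymptotics of $\bigl((2n)!\bigr)^{1/(2n)}$ should be handled via Stirling rather than a crude factorial bound, since we need growth exactly linear in $n$ so that the reciprocals form a divergent harmonic-type series; a bound like $(2n)! \le (2n)^{2n}$ would only give $(\mu_{2n})^{1/(2n)} = O(n)$ as well, which is still enough, so even this can be kept elementary. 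The rest is bookkeeping around results already proved.
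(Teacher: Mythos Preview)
Your argument is correct and is in fact the standard route to this classical result: recover all raw moments from the derivatives of $M_X$ at $0$ (Proposition~1), bound their growth using $\mathbb{E}[e^{s|X|}]<\infty$ for some $s\in(0,\epsilon)$, verify Carleman's condition, and invoke Theorem~1. The estimates are clean and the two subtleties you flag (the genuine need for two-sidedness of the interval, and the linear growth of $((2n)!)^{1/(2n)}$) are the right ones to watch.

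However, there is nothing to compare against: the paper does not supply its own proof of this theorem. It is stated, alongside Theorems~1, 3, 4 and Proposition~1, as a known supporting fact from probability theory and is used without justification in the derivation that follows. So your proposal is not merely ``different'' from the paper's proof; it fills in an argument that the paper omits entirely. If anything, your write-up is more complete than what the paper provides for this statement.
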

From the abovementioned theorems and propositions, we determine that if we want to operate on the distribution of a random variable, we only need to operate on its moment generating function. Now, we present a convergence theorem, which reveals a truly important characteristic in the optimization procedure of domain-specific cumulative distribution functions.
\begin{theorem}
(Convergence theorem) If a series of moment generating function $\left\{ M_n\left( t \right) \right\}$ and $\left\{ M_X \left( t \right) \right\}$ is well-defined on a neighborhood of $t = 0$ (denoted as $U \left( 0 \right)$). For every $t \in U \left( 0 \right)$, $\left\{ M_n\left( t \right) \right\}$ converges to $\left\{ M_X \left( t \right) \right\}$, that is: 
$$\lim\limits_{n \rightarrow \infty} M_n \left( t \right) = M_X \left( t \right) \quad \forall t \in U \left( 0 \right)$$
then the variables $X_n$ corresponding to the moment generating function $M_n \left( t \right)$ converge to $X$ in distribution, which is $X_n \stackrel{d}\rightarrow X$.
\end{theorem}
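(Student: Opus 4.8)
The plan is to mimic the classical continuity-theorem argument (the analogue of L\'evy's theorem for characteristic functions, here phrased for moment generating functions), closing the loop with Theorem 3, which guarantees that a distribution is pinned down by its MGF on any neighborhood of $0$. Concretely, I would proceed in five steps: (i) deduce tightness of $\{X_n\}$ from the two-sided control of the MGFs near $0$; (ii) extract, from an arbitrary subsequence, a further weakly convergent sub-subsequence; (iii) show its limit has MGF $M_X$; (iv) invoke Theorem 3 to identify the limit law as that of $X$; (v) conclude by the subsequence principle.

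For step (i), fix $t_0 > 0$ with $[-t_0, t_0] \subset U(0)$. Since $M_n(\pm t_0) \to M_X(\pm t_0)$, the sequences $M_n(t_0)$ and $M_n(-t_0)$ are bounded, and from $e^{t_0 |x|} \le e^{t_0 x} + e^{-t_0 x}$ we get $\sup_n \mathbb{E}[e^{t_0 |X_n|}] =: K < \infty$. Markov's inequality then yields $\Pr(|X_n| > R) \le K e^{-t_0 R}$ uniformly in $n$, i.e.\ tightness. By Helly's selection theorem together with tightness (equivalently, Prokhorov's theorem), any subsequence of $\{X_n\}$ has a further subsequence $\{X_{n_k}\}$ with $X_{n_k} \stackrel{d}{\to} Y$ for some random variable $Y$.

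For step (iii), fix $t \in U(0)$ and choose $\delta > 0$ with $[t-\delta, t+\delta] \subset U(0)$. Convergence of $M_n$ at $t \pm \delta$ gives a uniform bound $\sup_n \bigl(\mathbb{E}[e^{(t+\delta)X_n}] + \mathbb{E}[e^{(t-\delta)X_n}]\bigr) < \infty$, and a de la Vall\'ee-Poussin-type estimate (on the event $\{X_n > R\}$ write $e^{tX_n} = e^{(t+\delta)X_n}\,e^{-\delta X_n}$ and dominate, symmetrically on $\{X_n < -R\}$) shows the family $\{e^{tX_n}\}_n$ is uniformly integrable. Combining uniform integrability with $X_{n_k} \stackrel{d}{\to} Y$ through a truncation argument — $e^{tx}\mathbf{1}_{\{|x| \le L\}}$ is bounded and continuous, while the discarded tails are uniformly small — gives $M_{n_k}(t) \to \mathbb{E}[e^{tY}] = M_Y(t)$. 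On the other hand, the hypothesis gives $M_{n_k}(t) \to M_X(t)$. Hence $M_Y(t) = M_X(t)$ for every $t \in U(0)$, so by Theorem 3 the distribution of $Y$ equals that of $X$. Since the initial subsequence was arbitrary, the subsequence principle forces $X_n \stackrel{d}{\to} X$.

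The main obstacle is step (iii): weak convergence only transfers expectations of bounded continuous test functions, whereas $e^{tx}$ is unbounded, so the whole argument rests on the uniform-integrability estimate — and crucially on the fact that it is available because $U(0)$ is a genuine open neighborhood, giving room to move from $t$ to $t \pm \delta$ while staying inside the common domain of the MGFs. The tightness step in (i) leans on the same two-sided structure; if $U(0)$ were only one-sided, neither the tail bound nor the uniform integrability would go through, and the identification of the limit would fail.
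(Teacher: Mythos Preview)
Your argument is correct and is essentially the standard proof of the MGF continuity theorem (often attributed to Curtiss): tightness from a two-sided exponential moment bound, Prokhorov to extract convergent subsequences, uniform integrability of $e^{tX_n}$ from control at $t\pm\delta$ to pass to the limit in the unbounded functional, and then identification of the limit via the uniqueness result (the paper's Theorem~2, not Theorem~3 as you wrote --- in the paper's numbering Theorem~3 \emph{is} the convergence theorem you are proving, while Theorem~2 is the uniqueness statement you need in step~(iv)).

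There is nothing to compare against on the paper's side: the paper states this convergence theorem as a supporting classical result and does not supply a proof of its own. It is simply quoted alongside the Carleman and expansion theorems to motivate the DWMD construction. So your proposal is not merely consistent with the paper's approach --- it actually fills in an argument the paper omits entirely. The only edit needed is the cross-reference: invoke Theorem~2 (uniqueness of the distribution from its MGF on a neighborhood of $0$) rather than Theorem~3.
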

The last theorem we present is the termed expansion theorem, which reveals the motivation for using every order moment to establish our metric function.
\begin{theorem}
(Expansion theorem) If $M_X \left( t \right)$ is well-defined on a neighborhood $U \left( 0 \right)$, then there exist an open interval $\left( - \epsilon, \epsilon \right)$, such that $M_X \left( t \right) < \infty$, $\forall t \in \left( -\epsilon, \epsilon \right)$ and $M_X \left( t \right)$ have series expansion:
$$M_X \left( X \right) = \sum_{n = 0}^{\infty} \frac{\mathbb{E} \left( X^{n} \right)}{n!}t^{n} \quad \forall t \in \left( - \epsilon, \epsilon \right)$$
\end{theorem}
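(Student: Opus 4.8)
The plan is to exploit the classical fact that a moment generating function which is finite on a neighborhood of the origin is real-analytic there, so that its Taylor series about $0$ converges to it, with coefficients $\mathbb{E}(X^n)/n!$ by the Proposition above. I would proceed in three stages: first establish finiteness of $M_X$ (and, crucially, of $\mathbb{E}[e^{t|X|}]$) on a symmetric interval; then deduce that every raw moment $\mathbb{E}(X^n)$ is finite together with a quantitative growth bound; and finally justify interchanging expectation with the power-series expansion of $e^{tX}$.

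First I would pick $\delta > 0$ with $(-\delta,\delta) \subseteq U(0)$ and set $\epsilon = \delta/2$ (any fixed fraction works). For $|t| < \delta$ the elementary inequality $e^{t|X|} \le e^{tX} + e^{-tX}$ yields $\mathbb{E}[e^{t|X|}] \le M_X(t) + M_X(-t) < \infty$; in particular $M_X(t) < \infty$ on $(-\epsilon,\epsilon)$, which is the first assertion. Fixing such a $t > 0$, the termwise bound $|x|^n \le n!\, t^{-n} e^{t|x|}$, read off from the all-nonnegative series for $e^{t|x|}$, gives $\mathbb{E}|X|^n \le n!\, t^{-n}\,\mathbb{E}[e^{t|X|}] < \infty$ for every $n$. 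This simultaneously shows all moments exist (so the right-hand series even makes sense) and supplies the estimate needed for convergence.

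For the expansion itself I would write $e^{tX} = \sum_{n=0}^{\infty} (tX)^n/n!$ pointwise and apply dominated convergence to the partial sums $S_N = \sum_{n=0}^{N} (tX)^n/n!$, which satisfy $|S_N| \le \sum_{n=0}^{N} |tX|^n/n! \le e^{|t|\,|X|}$; the dominating function $e^{|t|\,|X|}$ is integrable whenever $|t| < \epsilon < \delta$ by the first stage. Hence $M_X(t) = \mathbb{E}[e^{tX}] = \lim_{N\to\infty}\mathbb{E}[S_N] = \sum_{n=0}^{\infty} \frac{t^n \mathbb{E}(X^n)}{n!}$, and the same growth bound gives absolute convergence, since for $|t| < \delta' < \delta$ one has $\sum_n |t|^n \mathbb{E}|X|^n / n! \le \mathbb{E}[e^{\delta'|X|}] \sum_n (|t|/\delta')^n < \infty$. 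The only real obstacle is the term-by-term integration: everything hinges on producing the integrable majorant $e^{|t|\,|X|}$, which is exactly what two-sided finiteness of the MGF near $0$ buys, together with the moment bound $\mathbb{E}|X|^n \le n!\, t^{-n}\,\mathbb{E}[e^{t|X|}]$. Identification of the coefficients with $\mathbb{E}(X^n)/n!$ is then immediate (equivalently, it follows from the Proposition and uniqueness of power-series coefficients).
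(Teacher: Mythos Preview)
Your argument is correct and is essentially the standard textbook proof of this classical fact about moment generating functions. Note, however, that the paper does \emph{not} supply its own proof of this theorem: it is stated, together with the Carleman theorem, Proposition~1, and the convergence theorem, as a known supporting result that motivates the construction of the DWMD metric, and is then invoked without proof in the derivation that follows. So there is no paper proof to compare against; your proposal simply fills in a gap the authors left as background.

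On the substance: your three stages (integrable majorant $e^{|t||X|}$ from two-sided finiteness, the moment bound $\mathbb{E}|X|^n \le n!\,t^{-n}\mathbb{E}[e^{t|X|}]$, and dominated convergence on the partial sums) are exactly the right ingredients, and the absolute-convergence check via the geometric bound is a clean way to confirm the series is honest on the smaller interval. One cosmetic point: the paper writes the left-hand side as $M_X(X)$, which is a typo for $M_X(t)$; your proof correctly treats it as a function of $t$.
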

The aforementioned theorems and propositions support the following derivation process of our metric function. Before we directly prove that our DWMD function is strictly a metric, we first prove a lemma.
\begin{lemma}
Consider the space of all sequences S.
$$S \stackrel{\Delta} = \left\{ \xi = \left( \xi_1, \xi_2, ..., \xi_n, ... \right)| \left\{ \xi_n \right\} \text{is an arbitrary sequence} \right\}$$
$\left\{a_n\right\}$ is a sequence that satisfies:
$$\sum_{n = 1}^{\infty} a_n < \infty \quad \text{and} \quad \left\{a_n\right\} \text{monotonically decreases,}$$
then we have the function $F: S \rightarrow \mathbb{R}$, and $\forall \xi \in S$
$$F \left( \xi \right) = \sum\limits_{n = 1}^{\infty}a_n \frac{|\xi|^{\beta}}{C + |\xi|^{\beta}}$$
(where $C$ is a positive constant, $\beta \geq 1$), which has the following properties:
\begin{align*}
&(1)F \left( \xi \right) \geq 0, \forall \xi \in S; F \left( \xi \right) = 0 \Leftrightarrow \xi = 0\\
&(2)F\left( \xi + \omega \right) \leq F \left( \xi \right) + F \left( \omega \right), \forall \xi, \omega \in S\\
&(3)F\left( - \xi \right) = F \left( \xi \right), \forall \xi \in S
\end{align*}
\end{lemma}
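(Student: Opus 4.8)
The plan is to reduce all three properties to coordinatewise statements about the scalar function $\phi(x) = \frac{x^{\beta}}{C + x^{\beta}}$ on $[0,\infty)$, writing the $n$-th summand with the $n$-th coordinate so that $F(\xi) = \sum_{n=1}^{\infty} a_n \phi(|\xi_n|)$. Two preliminary observations make $F$ well-posed. First, the hypotheses on $\{a_n\}$ force $a_n \ge 0$: convergence of $\sum_n a_n$ gives $a_n \to 0$, and a monotonically decreasing sequence with limit $0$ is nonnegative. Second, $0 \le \phi(x) < 1$ for all $x \ge 0$, with $\phi(x) = 0 \iff x = 0$. Hence $0 \le F(\xi) \le \sum_n a_n < \infty$, so $F$ takes values in $\mathbb{R}_{\ge 0}$. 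Property (1) is then immediate: nonnegativity holds termwise, and $F(\xi) = 0$ forces $a_n \phi(|\xi_n|) = 0$ for every $n$, hence $\xi_n = 0$ (using $a_n > 0$, as in the application $a_n = e^{-\psi n / \tilde{\tau}}$), i.e. $\xi = 0$; the converse is clear. Property (3) holds because $|-\xi_n| = |\xi_n|$ leaves every term of the series unchanged.

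The substantive part is the triangle inequality (2). Since each term is $\ge 0$ and $\le a_n$, the series converges absolutely, so it suffices to prove the coordinatewise inequality $\phi(|\xi_n + \omega_n|) \le \phi(|\xi_n|) + \phi(|\omega_n|)$ and then sum against the nonnegative weights $a_n$. Because $|\xi_n + \omega_n| \le |\xi_n| + |\omega_n|$ and $\phi$ is nondecreasing — clear from $\phi(x) = 1 - \frac{C}{C + x^{\beta}}$ — it is enough to show $\phi$ is subadditive on $[0,\infty)$, i.e. $\phi(s + t) \le \phi(s) + \phi(t)$ for $s,t \ge 0$. I would obtain this by chaining two elementary facts: (i) $g(u) := \frac{u}{C + u}$ is nondecreasing and subadditive, since $g(a) + g(b) = \frac{a}{C+a} + \frac{b}{C+b} \ge \frac{a}{C+a+b} + \frac{b}{C+a+b} = g(a+b)$; and (ii) $u \mapsto u^{\beta}$ is subadditive for $0 < \beta \le 1$, i.e. $(s+t)^{\beta} \le s^{\beta} + t^{\beta}$, which follows from $\left(\frac{s}{s+t}\right)^{\beta} \ge \frac{s}{s+t}$ and the same for $t$. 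Combining, $\phi(s+t) = g\big((s+t)^{\beta}\big) \le g\big(s^{\beta} + t^{\beta}\big) \le g(s^{\beta}) + g(t^{\beta}) = \phi(s) + \phi(t)$.

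The one genuine obstacle is this subadditivity of $\phi$, and it is exactly where the exponent range matters: it holds precisely for $0 < \beta \le 1$ and fails for $\beta > 1$ (for example with $\beta = 2$, $C = 1$ one has $\phi(0.2) \approx 0.038 > 2\,\phi(0.1) \approx 0.020$), so the proof is carried out under $\beta \le 1$ as in Definition 1 — the ``$\beta \ge 1$'' printed in the statement should be read as $\beta \le 1$. Everything else — finiteness of $F$, nonnegativity of $\{a_n\}$, and the passage from the termwise estimates to the series — is routine and uses nothing beyond $\sum_n a_n < \infty$ and the monotonicity of $\{a_n\}$. As a byproduct, the three properties show that $d(\xi,\omega) := F(\xi - \omega)$ is a (possibly extended-valued) metric on $S$, which is the role this lemma plays in the proof that DWMD is a metric.
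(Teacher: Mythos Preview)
Your proof is correct and follows essentially the same route as the paper: both arguments reduce property (2) to the termwise chain $\phi(|\xi_n+\omega_n|) \le g(|\xi_n|^{\beta}+|\omega_n|^{\beta}) \le \phi(|\xi_n|)+\phi(|\omega_n|)$, using the monotonicity of $g(u)=u/(C+u)$ together with the subadditivity $(s+t)^{\beta}\le s^{\beta}+t^{\beta}$ (which the paper cites as the ``inverse Minkowski inequality''). Your treatment is in fact more careful --- you justify $a_n\ge 0$, the well-posedness of $F$, and correctly flag that the argument requires $\beta\le 1$ (matching Definition~1) rather than the $\beta\ge 1$ printed in the lemma.
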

\begin{proof}
The properties (1) and (3) are obvious; We mainly focus on property (2). 

For mapping $f$ defined on $\mathbb{R}^{+}$, $f = \frac{\lambda}{C + \lambda}$, the derivative of $f$ is $f^{'} = \frac{C}{\left( C + \lambda \right)^{2}}$, which is nonnegative. Then $f$ is monotonically increasing function on $\mathbb{R}^{+}$. According to the \textbf{\emph{inverse Minkowski inequality}}, we have
$$\frac{|\xi + \omega|^{\beta}}{C + |\xi + \omega|^{\beta}} \leq \frac{|\xi|^{\beta}+|\omega|^{\beta}}{C + |\xi|^{\beta} + |\omega|^{\beta}} \leq \frac{|\xi|^{\beta}}{C + |\xi|^{\beta}} + \frac{|\omega|^{\beta}}{C + |\omega|^{\beta}}$$
That is, $F\left( \xi + \omega \right) \leq F \left( \xi \right) + F \left( \omega \right)$. Therefore lemma is proven.
\end{proof}
Equipped with Lemma 1, the derivation process and proof of our DWMD function are presented as follows.

For the source and target domains $X_S$ and $X_T$, $f_{X_S}\left(x \right)$ and $f_{X_T} \left( x^{*} \right)$ are the density distribution functions of $X_S$ and $X_T$. Their moment generating functions are $M_{X_S} \left( t \right) = \mathbb{E}_{X_S} \left( e^{tx} \right)$ and $M_{X_T} \left( t \right) = \mathbb{E}_{X_T} \left( e^{tx^{*}} \right)$. By the expansion theorem, we have
\begin{align*}
M_{X_S} \left( t\right) &= \sum\limits_{n = 0}^{\infty}\frac{\mathbb{E}\left( X_{S}^{n}\right)}{n !} t^{n}\\ &= < \left( 1, \mathbb{E}X_S^{1}, \mathbb{E}X_{S}^{2}, ..., \mathbb{E}X_{S}^{n}, ... \right), \left( 1, t, t^{2}, ..., t^{n}, ... \right) >
\end{align*}
Similarly, for $M_{X_T} \left( t \right)$
\begin{align*}
M_{X_T} \left( t \right) &= \sum\limits_{n = 0}^{\infty}\frac{\mathbb{E}\left( X_{T}^{n}\right)}{n !} t^{n}\\ &= < \left( 1, \mathbb{E}X_T^{1}, \mathbb{E}X_{T}^{2}, ..., \mathbb{E}X_{T}^{n}, ... \right), \left( 1, t, t^{2}, ..., t^{n}, ... \right) >
\end{align*}
where $< \cdot>$ is the sequence inner product. We can see from the above inner product that when $t$ is fixed, then $M_X \left( t \right)$ is wholly determined by infinite moment sequence $\left( 1, \mathbb{E}X^1, ..., \mathbb{E}X^{n}, ..., \right)$. Construct a mapping 
\begin{align*}
F: \quad & \Omega^{*} \rightarrow S\\
&X \mapsto \left( 1, \mathbb{E}X^1, ..., \mathbb{E}X^n, ... \right)
\end{align*}
where $\Omega^* = \left\{ X \text{ is random variable} | X \stackrel{i.i.d}\sim P; P \in \left\{ X_S, X_T \right\}\right\}$. Let $L = \left\{ \left( 1, \mathbb{E}X^1, ..., \mathbb{E}X^n, ... \right) | X \in \Omega^* \right\}$. It is not hard to prove that $F$ is a one-to-one mapping from $\Omega^*$ to $L$. $\Omega^*$ is noted as $\Omega_S^{*}$ and $\Omega_T^{*}$ when $P = X_S$ and $P = X_T$, respectively. Similarly, when $P = X_S$ and $P = X_T$, then $L$ is written as $L_S$ and $L_T$. Since $e^{\omega}$ satisfies the two requirements in Lemma 1 in each dimension, and according to the lemma, $\mathcal{D}_{new}: L_S \times L_T \rightarrow \mathbb{R}$
$$\mathcal{D}_{new} \left( \mathcal{D}_S || \mathcal{D}_T \right) = \sum_{n = 1}^{\infty} e^{\omega} \odot \frac{| \mathbb{E}X_{S}^{n} - \mathbb{E}X_{T}^{n} |^{\beta}}{C + | \mathbb{E}X_{S}^{n} - \mathbb{E}X_{T}^{n}|^{\beta}}$$
is a metric function between $L_S$ and $L_T$. According to theorem 2, $\mathcal{D}_{new}$ is a strictly defined metric on the source and target distributions.

The reason for choosing $e^{\omega}$ as the coefficient vectors for the series is twofold. First, in each dimension of the coefficient vector, the property of convergence in Lemma 1 is satisfied, which is essential for the derivation process. Second, the $e^{-\frac{\psi x}{\tilde{\tau} \left( X_S, X_T \right)}}$ is a convex function, and for low raw moments, the value of the function is relatively large, while the value decreases sharply when the order $n$ increases, since only finite terms of the series can be calculated in deep network training, and we want the upper error bound to be possibly small. 

The following proposition presents the upper error bound of our metric function when only finite terms of a series are used for model training. 
\begin{proposition}
Let $\mathcal{D}_{train}$ denote the first $n$ terms of $\mathcal{D}_{new}$, $\nu$ is an integer that satisfies $\nu = \lfloor \frac{\psi}{\tau_{max}} \rfloor$. Then the upper error bound is:
$$|\mathcal{D}_{new} - \mathcal{D}_{train}| \leq \sum_{k = n+1}^{\infty} 2^{- \nu k}$$
\end{proposition}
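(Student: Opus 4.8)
The plan is to recognize $\mathcal{D}_{new} - \mathcal{D}_{train}$ as the tail of the defining series and then dominate that tail by a geometric series. By definition $\mathcal{D}_{train}$ is the partial sum of the first $n$ terms, and every term $e^{\omega_k} \odot \frac{|\mathbb{E}X_S^k - \mathbb{E}X_T^k|^{\beta}}{C + |\mathbb{E}X_S^k - \mathbb{E}X_T^k|^{\beta}}$ is coordinatewise nonnegative, so the difference is exactly $\sum_{k = n+1}^{\infty} e^{\omega_k} \odot \frac{|\mathbb{E}X_S^k - \mathbb{E}X_T^k|^{\beta}}{C + |\mathbb{E}X_S^k - \mathbb{E}X_T^k|^{\beta}}$ and is itself nonnegative; hence the absolute value on the left is harmless, and it suffices to bound this tail coordinatewise (which is the reading under which the scalar right-hand side makes sense). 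The convergence of the full series that is needed for this identification is already guaranteed, since the coefficients $e^{\omega}$ were shown in Lemma 1 to be summable in each dimension.

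First I would discard the saturating fraction. For every real $a \ge 0$ and every $C > 0$ one has $0 \le \frac{a}{C + a} < 1$, so taking $a = |\mathbb{E}X_S^k - \mathbb{E}X_T^k|^{\beta}$ shows each fraction is at most $1$ in every coordinate, irrespective of $\beta$ or the size of the moment gap. Therefore the tail is dominated coordinatewise by $\sum_{k = n+1}^{\infty} e^{\omega_k}$, and the problem reduces to controlling these exponential weights.

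Next I would bound the weights. In coordinate $i$ we have $e^{\omega_k} = \exp(-\psi k / \tilde\tau^{(i)})$, and combining the normalization $\tilde\tau = \tau / \tau_{max}$ (so that $\tilde\tau^{(i)} \le 1$), the definition $\nu = \lfloor \psi / \tau_{max} \rfloor$, and the elementary fact $e > 2$, one obtains $e^{\omega_k} \le 2^{-\nu k}$ in every coordinate. Summing the resulting geometric tail $\sum_{k = n+1}^{\infty} 2^{-\nu k}$, which converges because $\nu \ge 1$ (indeed $\nu$ is a positive integer once $\psi \ge \tau_{max}$), yields precisely the claimed estimate.

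The step I expect to be the main obstacle is the weight bound $e^{\omega_k} \le 2^{-\nu k}$: the bookkeeping relating $\psi$, $\tau_{max}$, the normalization of $\tilde\tau$, and the change of base from $e$ to $2$ has to be carried out carefully, and it should be made explicit whether $|\mathcal{D}_{new} - \mathcal{D}_{train}|$ is meant coordinatewise or as a norm, since the argument above is exact in the coordinatewise reading. Everything else — the tail identification, the $\frac{a}{C + a} \le 1$ estimate, and the geometric summation — is routine.
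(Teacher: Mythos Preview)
Your proposal is correct and follows essentially the same route as the paper: bound the saturating fraction $\frac{|\cdot|^{\beta}}{C+|\cdot|^{\beta}}$ by $1$, then dominate the remaining weight $e^{\omega_k}$ by $2^{-\nu k}$ using $\nu=\lfloor\psi/\tau_{max}\rfloor$ together with $e>2$, and sum the geometric tail. Your added remarks about the coordinatewise reading of $|\mathcal{D}_{new}-\mathcal{D}_{train}|$ and about the weight bound being the step that needs the most care are consistent with (and a bit more explicit than) the paper's terse argument.
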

\begin{proof}
Since $C$ is a positive real number, we have:
$$\frac{| \mathbb{E}X_{S}^{n} - \mathbb{E}X_{T}^{n} |^{\beta}}{C + | \mathbb{E}X_{S}^{n} - \mathbb{E}X_{T}^{n}|^{\beta}} < 1$$
So for each order $k$, the latter term in the series can be increased to 1, and we obtain
$$\Rightarrow |\mathcal{D}_{new} - \mathcal{D}_{train}| < \sum_{k = n+1}^{\infty} e^{- \frac{\psi}{\tau_{max} k}}$$
For every $n$, $e^{- \frac{\psi}{\tau_{max} k}} < 2^{- \nu k}$. By the transitivity of inequality, the proposition is proven.
\end{proof}

\section{Experiments}
We evaluate the proposed DWMD on three benchmark datasets for domain adaptation, \emph{Office-31}, \emph{ImageCLEF-DA} and \emph{Office-10} with many state-of-the-art metric functions and other deep learning methods. 

\subsection{Setup}

\textbf{Office-31} is a standard benchmark for domain adaptation from \cite{saenko2010adapting}, comprising 4,652 images and 31 categories collected from three distinct domains: Amazon (A), Webcam (W) and DSLR (D). Following the previous works, we evaluate all methods across all six possible transfer tasks $A \rightarrow W$, $D \rightarrow W$, $W \rightarrow D$, $A \rightarrow D$, $D \rightarrow A$ and $W \rightarrow A$.
\\
\\
\textbf{ImageCLEF-DA}\footnote{http://imageclef.org/2014/adaptation} is also a benchmark dataset for domain adaptation, which contains 12 categories shared by three public datasets, Caltech- 256 (C), ImageNet ILSVRC 2012 (I), and Pascal VOC 2012 (P). Each of the datasets is considered as a domain, comprising 600 images in total and 50 images for every category. We use all the combinations and evaluate methods on six transfer tasks: $I \rightarrow P$, $P \rightarrow I$, $I \rightarrow C$, $C \rightarrow I$, $C \rightarrow P$ and $P \rightarrow C$.
\\
\\
\textbf{Office-10} is a more classic benchmark dataset from \cite{gong2012geodesic}. It is a concise version of \emph{Office-31} and consists of 10 shared categories from three domains: Amazon (A), Webcam (W) and DSLR (D). This dataset is used to compare the effectiveness of different metric functions, i.e. MMD, CMD, DWMD. These metric functions are tested over all six transfer tasks: $A \rightarrow W$, $D \rightarrow W$, $W \rightarrow D$, $A \rightarrow D$, $D \rightarrow A$ and $W \rightarrow A$ with one shared layer for hidden representation matching (Consider only bounded activation function: Sigmoid). Multi-layer cases will be discussed in the analysis section (Consider both bounded and unbounded activation functions: Sigmoid \& ReLU).
\\
\\
For all three datasets, we use the latent representations of Alexnet ~\cite{krizhevsky2012imagenet} and Resnet50 ~\cite{he2016deep}, and train the classifier with one hidden layer and 4,096 and 2,048 hidden nodes, respectively.

We follow the standard training protocol for the datasets in UDA, and we compare our methods with the state-of-the-art hidden representation matching method where the metric function is adopted as one hidden layer domain regularizer in deep networks: central moment discrepancy (CMD). In addition, both shallow and deep domain adaptation methods are considered comparison methods: transfer component analysis (TCA), geodesic flow kernel (GFK), deep domain confusion (DDC), deep adaptation network (DAN), residual transfer network (RTN), and domain adversarial neural network (DANN). All experiments are conducted with randomly shuffled datasets and random initializations.

\begin{table*}
\centering
\caption{Classification accuracy (\%) on \emph{Office-31} dataset for UDA (AlexNet and ResNet50)}
\begin{tabular}{llllllll}
\toprule[2pt]
Method     & \text{   }A$\rightarrow$W &\text{   } D$\rightarrow$W & \text{   }W$\rightarrow$D &\text{   } A$\rightarrow$D &\text{   } D$\rightarrow$A & \text{   }W$\rightarrow$A & Avg  \\
\midrule
AlexNet ~\cite{krizhevsky2012imagenet}    & 61.6 $\pm$ 0.5  & 95.4 $\pm$ 0.3  & 99.0 $\pm$ 0.2  & 63.8 $\pm$ 0.5  & 51.1 $\pm$ 0.6  & 49.8 $\pm$ 0.4  & 70.1 \\
TCA ~\cite{pan2010domain}  & 61.0 $\pm$ 0.0  & 93.2 $\pm$ 0.0  & 95.2 $\pm$ 0.0  & 60.8 $\pm$ 0.0  & 51.6 $\pm$ 0.0  & 50.9 $\pm$ 0.0  & 68.8 \\
GFK  ~\cite{gong2012geodesic}  & 60.4 $\pm$ 0.0  & 95.6 $\pm$ 0.0  & 95.0 $\pm$ 0.0  & 60.6 $\pm$ 0.0  & 52.4 $\pm$ 0.0  & 48.1 $\pm$ 0.0  & 68.7 \\
DDC   ~\cite{tzeng2014deep}  & 61.8 $\pm$ 0.4  & 95.0 $\pm$ 0.5  & 98.5 $\pm$ 0.4  & 64.4 $\pm$ 0.3  & 52.1 $\pm$ 0.6  & 52.2 $\pm$ 0.4  & 70.6 \\
DAN  ~\cite{long2015learning}    & 68.5 $\pm$ 0.5  & 96.0 $\pm$ 0.3  & 99.0 $\pm$ 0.3  & 67.0 $\pm$ 0.4  & 54.0 $\pm$ 0.5  & 53.1 $\pm$ 0.5  & 72.9 \\
RTN    ~\cite{long2016unsupervised}  & \textbf{73.3} $\pm$ 0.3  & \textbf{96.8} $\pm$ 0.2  & \textbf{99.6} $\pm$ 0.1  & 71.0 $\pm$ 0.2  & 50.5 $\pm$ 0.3  & 51.0 $\pm$ 0.1  & 73.7 \\
DANN ~\cite{ganin2017domain}  & 73.0 $\pm$ 0.5  & 96.4 $\pm$ 0.3  & 99.2 $\pm$ 0.3  & \textbf{72.3} $\pm$ 0.3  & 53.4 $\pm$ 0.4  & 51.2 $\pm$ 0.5  & 74.3 \\
CMD ~\cite{zellinger2017central}  &  70.3 $\pm$ 0.4  &  96.0 $\pm$ 0.6       &   99.5 $\pm$ 0.2   &  70.0 $\pm$ 0.1   & 54.5 $\pm$ 0.3   & 53.4 $\pm$ 0.4    &   74.0   \\
DWMD (ours)      &  \textbf{73.3} $\pm$ 0.1  &  \textbf{96.8} $\pm$ 0.2   &  \textbf{99.6} $\pm$ 0.1  &  71.5 $\pm$ 0.4 &  \textbf{57.1} $\pm$ 0.3   &  \textbf{54.8} $\pm$ 0.4   &   \textbf{75.5}   \\
\midrule
ResNet  ~\cite{he2016deep}  & 68.4 $\pm$ 0.2  & 96.7 $\pm$ 0.1  & 99.3 $\pm$ 0.1  & 68.9 $\pm$ 0.2  & 62.5 $\pm$ 0.3  & 60.7 $\pm$ 0.3  & 76.1 \\
TCA  ~\cite{pan2010domain}  & 72.7 $\pm$ 0.0  & 96.7 $\pm$ 0.0  & \textbf{99.6} $\pm$ 0.0  & 74.1 $\pm$ 0.0  & 61.7 $\pm$ 0.0  & 60.9 $\pm$ 0.0  & 77.6 \\
GFK  ~\cite{gong2012geodesic}  & 72.8 $\pm$ 0.0  & 95.0 $\pm$ 0.0  & 98.2 $\pm$ 0.0  & 74.5 $\pm$ 0.0  & 63.4 $\pm$ 0.0  & 61.0 $\pm$ 0.0  & 77.5 \\
DDC  ~\cite{tzeng2014deep}  & 75.6 $\pm$ 0.2  & 96.0 $\pm$ 0.2  & 98.2 $\pm$ 0.1  & 76.5 $\pm$ 0.3  & 62.2 $\pm$ 0.4  & 61.5 $\pm$ 0.5  & 78.3 \\
DAN  ~\cite{long2015learning} & 80.5 $\pm$ 0.4  & 97.1 $\pm$ 0.2  & 99.6 $\pm$ 0.1  & 78.6 $\pm$ 0.2  & 63.6 $\pm$ 0.3  & 62.8 $\pm$ 0.2  & 80.4 \\
RTN  ~\cite{long2016unsupervised}   & \textbf{84.5} $\pm$ 0.2  & 96.8 $\pm$ 0.1  & 99.4 $\pm$ 0.1  & 77.5 $\pm$ 0.3  & 66.2 $\pm$ 0.2  & 64.8 $\pm$ 0.3  & 81.6 \\
DANN  ~\cite{ganin2017domain}  & 82.0 $\pm$ 0.4  & 96.9 $\pm$ 0.2  & 99.1 $\pm$ 0.1  & 79.7 $\pm$ 0.4  & 68.2 $\pm$ 0.4  & \textbf{67.4} $\pm$ 0.5  & 82.2 \\
CMD  ~\cite{zellinger2017central}  &   82.7 $\pm$ 0.6  & 97.2 $\pm$ 0.3 &  99.2 $\pm$ 0.2  &  80.7 $\pm$ 0.5  &  68.6 $\pm$ 0.5  &  65.4 $\pm$ 0.4   &  82.3  \\
DWMD (ours)       &82.1 $\pm$ 0.6 &\textbf{97.9} $\pm$ 0.4 & \textbf{99.6} $\pm$ 0.1  &  \textbf{81.3} $\pm$ 0.6   &  \textbf{69.2} $\pm$ 0.5  &  66.2 $\pm$ 0.1  & \textbf{82.7}  \\   
\bottomrule[2pt]
\end{tabular}
\label{table1}
\end{table*}

\begin{table*}
\centering
\caption{Classification accuracy (\%) on \emph{ImageCLEF-DA} dataset for UDA (AlexNet and ResNet50)}
\begin{tabular}{llllllll}
\toprule[2pt]
Method  & \text{   }I $\rightarrow$ P & \text{   }P $\rightarrow$ I & \text{   }I $\rightarrow$ C & \text{   }C $\rightarrow$ I & \text{   }C $\rightarrow$ P & \text{   }P $\rightarrow$ C & Avg  \\
\midrule
Alexnet ~\cite{krizhevsky2012imagenet} & 66.2 $\pm$ 0.2    & 70.0 $\pm$ 0.2    & 84.3 $\pm$ 0.2    & 71.3 $\pm$ 0.4    & 59.3 $\pm$ 0.5    & 84.5 $\pm$ 0.3    & 73.9 \\
DAN ~\cite{long2015learning}  & 67.3 $\pm$ 0.2    & 80.5 $\pm$ 0.3    & 87.7 $\pm$ 0.3    & 76.0 $\pm$ 0.3    & 61.6 $\pm$ 0.3    & 88.4 $\pm$ 0.2    & 76.9 \\
RTN  ~\cite{long2016unsupervised} & 67.4 $\pm$ 0.3    & 82.3 $\pm$ 0.3    & 89.5 $\pm$ 0.4    & 78.0 $\pm$ 0.2    & 63.0 $\pm$ 0.2    & \textbf{90.1} $\pm$ 0.1    & 78.4 \\
DANN  ~\cite{ganin2017domain}  & 66.5 $\pm$ 0.5    & 81.8 $\pm$ 0.4    & 89.0 $\pm$ 0.5    & \textbf{79.8} $\pm$ 0.5    & 63.5 $\pm$ 0.4    & 88.7 $\pm$ 0.4    & 78.2 \\
CMD  ~\cite{zellinger2017central}  &   67.8 $\pm$ 0.2  &  81.7 $\pm$ 0.2    &   89.7 $\pm$ 0.3   &  77.9 $\pm$ 0.1   &  62.8 $\pm$ 0.6   &  88.3 $\pm$ 0.3   & 78.0  \\
DWMD (ours)   &   \textbf{67.9} $\pm$ 0.3 &   \textbf{82.9} $\pm$ 0.3   &  \textbf{90.1} $\pm$ 0.1  &  77.8 $\pm$ 0.1    &  \textbf{64.0} $\pm$ 0.7  &   89.2 $\pm$ 0.3   &   \textbf{78.7}   \\
\midrule
ResNet ~\cite{he2016deep} & 74.8 $\pm$ 0.3    & 83.9 $\pm$ 0.1    & 91.5 $\pm$ 0.3    & 78.0 $\pm$ 0.2    & 65.5 $\pm$ 0.3    & 91.2 $\pm$ 0.3    & 80.7 \\
DAN ~\cite{long2015learning}  & 75.0 $\pm$ 0.4    & 86.2 $\pm$ 0.2    & 93.3 $\pm$ 0.2    & 84.1 $\pm$ 0.4    & 69.8 $\pm$ 0.4    & 91.3 $\pm$ 0.4    & 83.3 \\
RTN  ~\cite{long2016unsupervised} & 74.6 $\pm$ 0.3    & 85.8 $\pm$ 0.1    & 94.3 $\pm$ 0.1    & 85.9 $\pm$ 0.3    & 71.7 $\pm$ 0.3    & 91.2 $\pm$ 0.4    & 83.9 \\
DANN  ~\cite{ganin2017domain}  & 75.0 $\pm$ 0.6    & 86.0 $\pm$ 0.3    & \textbf{96.2} $\pm$ 0.4    & \textbf{87.0} $\pm$ 0.5    & \textbf{74.3} $\pm$ 0.5    & 91.5 $\pm$ 0.6    & 85.0 \\
CMD  ~\cite{zellinger2017central} & 76.0  $\pm$ 0.5  &  85.4 $\pm$ 0.4   &  94.5 $\pm$  0.4  &  85.5 $\pm$ 0.8  &  72.2 $\pm$ 0.3 & 92.3 $\pm$ 0.5   &   84.3   \\
DWMD (ours)   &  \textbf{76.5} $\pm$ 0.1   &  \textbf{86.4} $\pm$ 0.3   &  95.1 $\pm$ 0.3    &  86.3 $\pm$ 0.4    &  72.7 $\pm$ 0.4  &  \textbf{93.4} $\pm$ 0.4  &   \textbf{85.1}  \\
\bottomrule[2pt]
\end{tabular}
\label{table2}
\end{table*}

\begin{table*}
\centering
\caption{Classification accuracy (\%) on \emph{Office-10} dataset for UDA (AlexNet and ResNet50)}
\begin{tabular}{llllllll}
\toprule[2pt]
Method     & \text{   }A$\rightarrow$W &\text{   } D$\rightarrow$W & \text{   }W$\rightarrow$D &\text{   } A$\rightarrow$D &\text{   } D$\rightarrow$A & \text{   }W$\rightarrow$A & Avg  \\
\midrule
AlexNet ~\cite{krizhevsky2012imagenet}    & 66.3 $\pm$ 0.5  & 96.6 $\pm$ 0.2  & 96.1 $\pm$ 0.2  & 82.1 $\pm$ 0.3  & 73.4 $\pm$ 0.3  & 64.1 $\pm$ 0.2  & 79.8 \\
MMD ~\cite{gretton2007kernel}  &  66.4 $\pm$ 0.1  &  96.2 $\pm$ 0.1       &   96.8 $\pm$ 0.4   &  82.8 $\pm$ 0.4   & 73.6 $\pm$ 0.3   & 65.4 $\pm$ 0.3    &   80.2   \\
CMD ~\cite{zellinger2017central}  &  81.0 $\pm$ 0.1  &  96.6 $\pm$ 0.1       &   \textbf{99.4} $\pm$ 0.0   &  89.8 $\pm$ 0.0   & 77.6 $\pm$ 0.3   & \textbf{76.2} $\pm$ 0.1    &   86.8   \\
DWMD (ours)      &  \textbf{85.1} $\pm$ 0.1  &  \textbf{97.0} $\pm$ 0.0   &  \textbf{99.4} $\pm$ 0.0  &  \textbf{91.7} $\pm$ 0.0 &  \textbf{77.9} $\pm$ 0.4   &  75.6 $\pm$ 0.2   &   \textbf{87.8}   \\
\midrule
ResNet  ~\cite{he2016deep}  & 68.4 $\pm$ 0.2  & 96.7 $\pm$ 0.1  & 99.3 $\pm$ 0.1  & 68.9 $\pm$ 0.2  & 62.5 $\pm$ 0.3  & 60.7 $\pm$ 0.3  & 76.1 \\
MMD ~\cite{gretton2007kernel}  & 82.0 $\pm$ 0.4  & 96.9 $\pm$ 0.2  & 99.1 $\pm$ 0.1  & 79.7 $\pm$ 0.4  & 68.2 $\pm$ 0.4  & \textbf{67.4} $\pm$ 0.5  & 82.2 \\
CMD  ~\cite{zellinger2017central}  &   96.6 $\pm$ 0.0  & 96.1 $\pm$ 0.1 &  100 $\pm$ 0.0  &  96.1 $\pm$ 0.6  &  94.2 $\pm$ 0.1  &  94.1 $\pm$ 0.0   &  96.2  \\
DWMD (ours)       & \textbf{97.3} $\pm$ 0.1 &\textbf{97.2} $\pm$ 0.1 & \textbf{100} $\pm$ 0.0  &  \textbf{96.3} $\pm$ 0.0   &  \textbf{94.3} $\pm$ 0.0  &  \textbf{94.6} $\pm$ 0.1  & \textbf{96.6}  \\   
\bottomrule[2pt]
\end{tabular}
\label{table3}
\end{table*}

\begin{figure*} 
\subfigure[CMD: \emph{Source}=A] { \label{fig:1} 
\includegraphics[width=0.48\columnwidth]{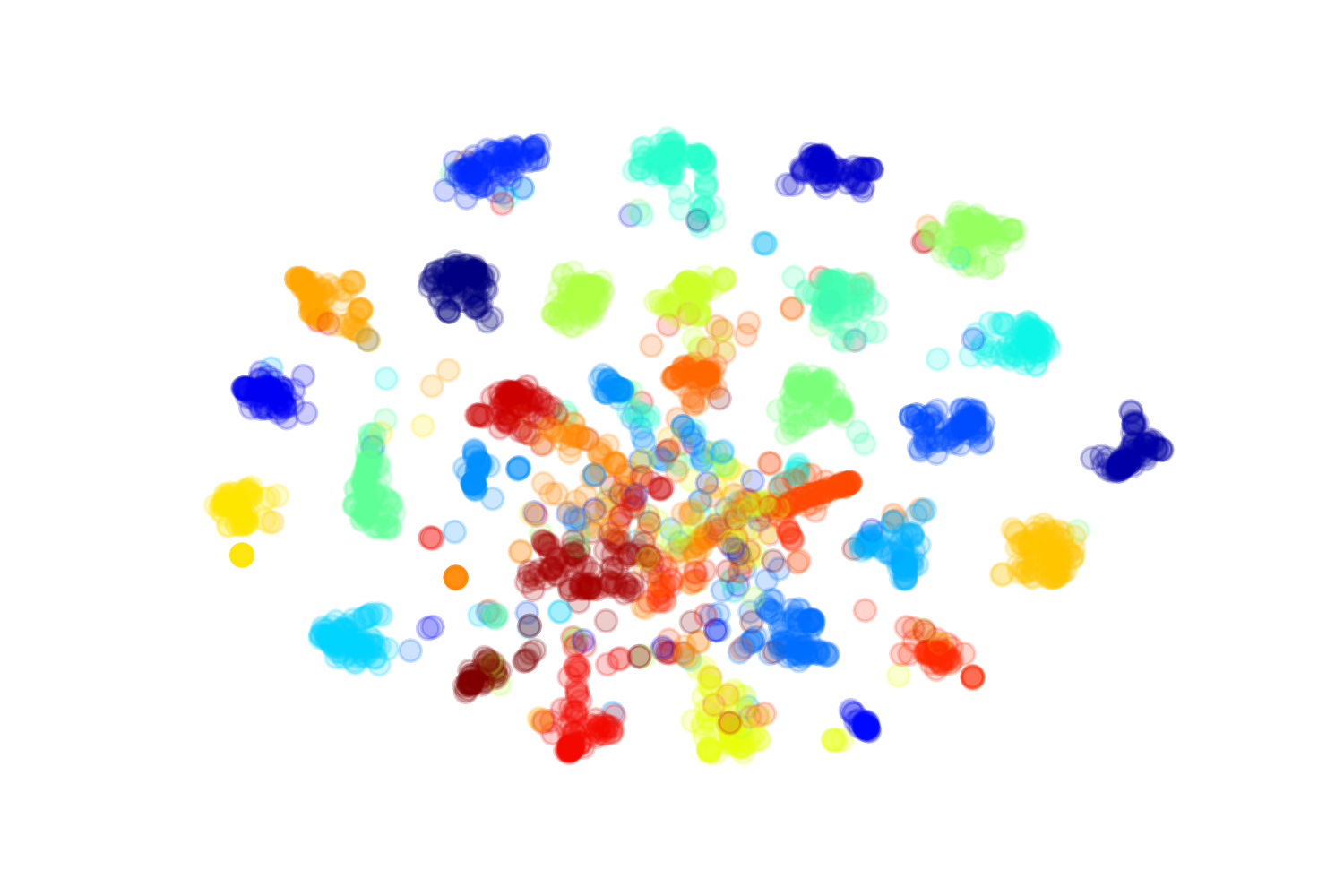} 
} 
\subfigure[CMD: \emph{Target}=D] { \label{fig:2} 
\includegraphics[width=0.48\columnwidth]{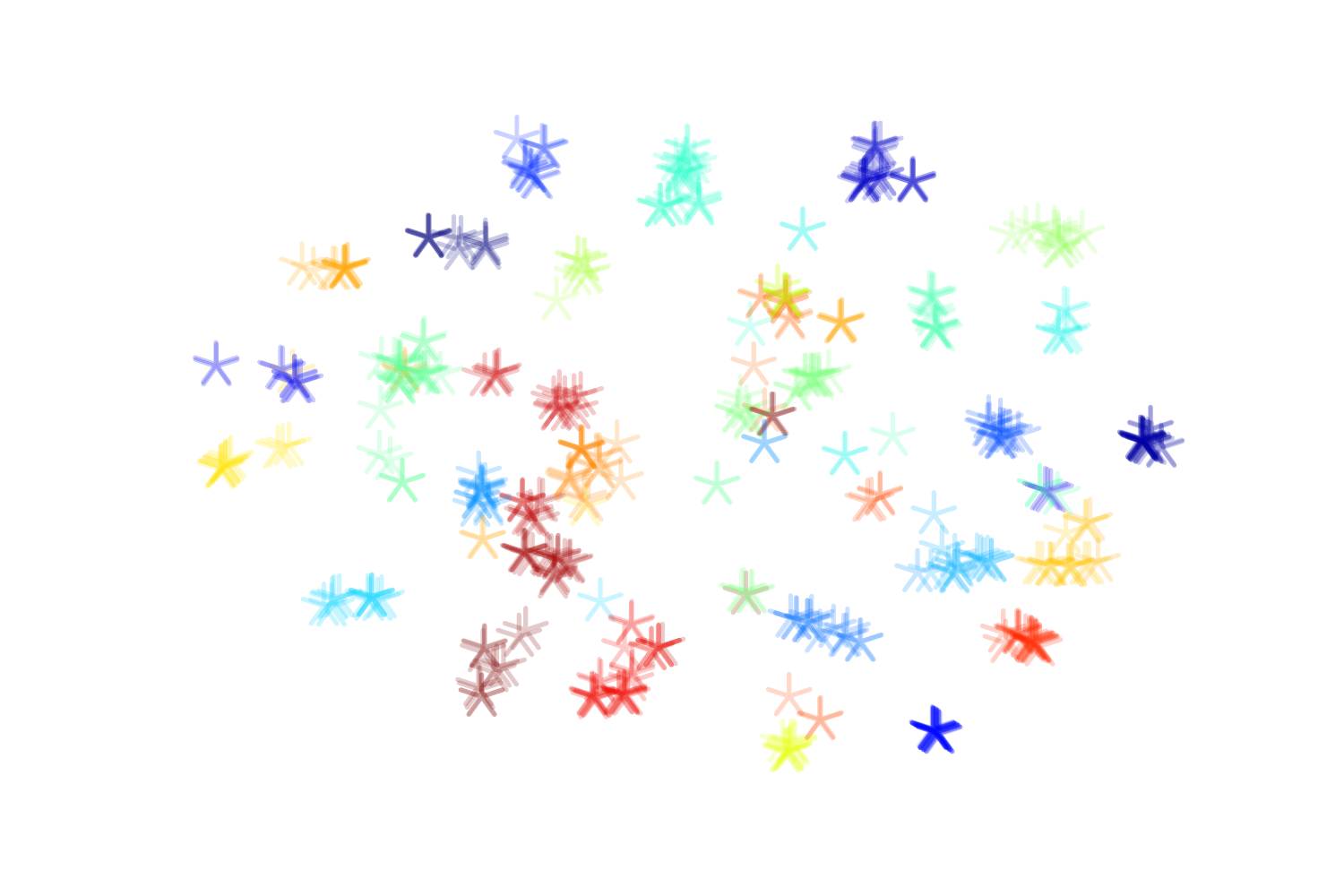} 
}
\subfigure[DWMD: \emph{Source}=A] { \label{fig:3} 
\includegraphics[width=0.48\columnwidth]{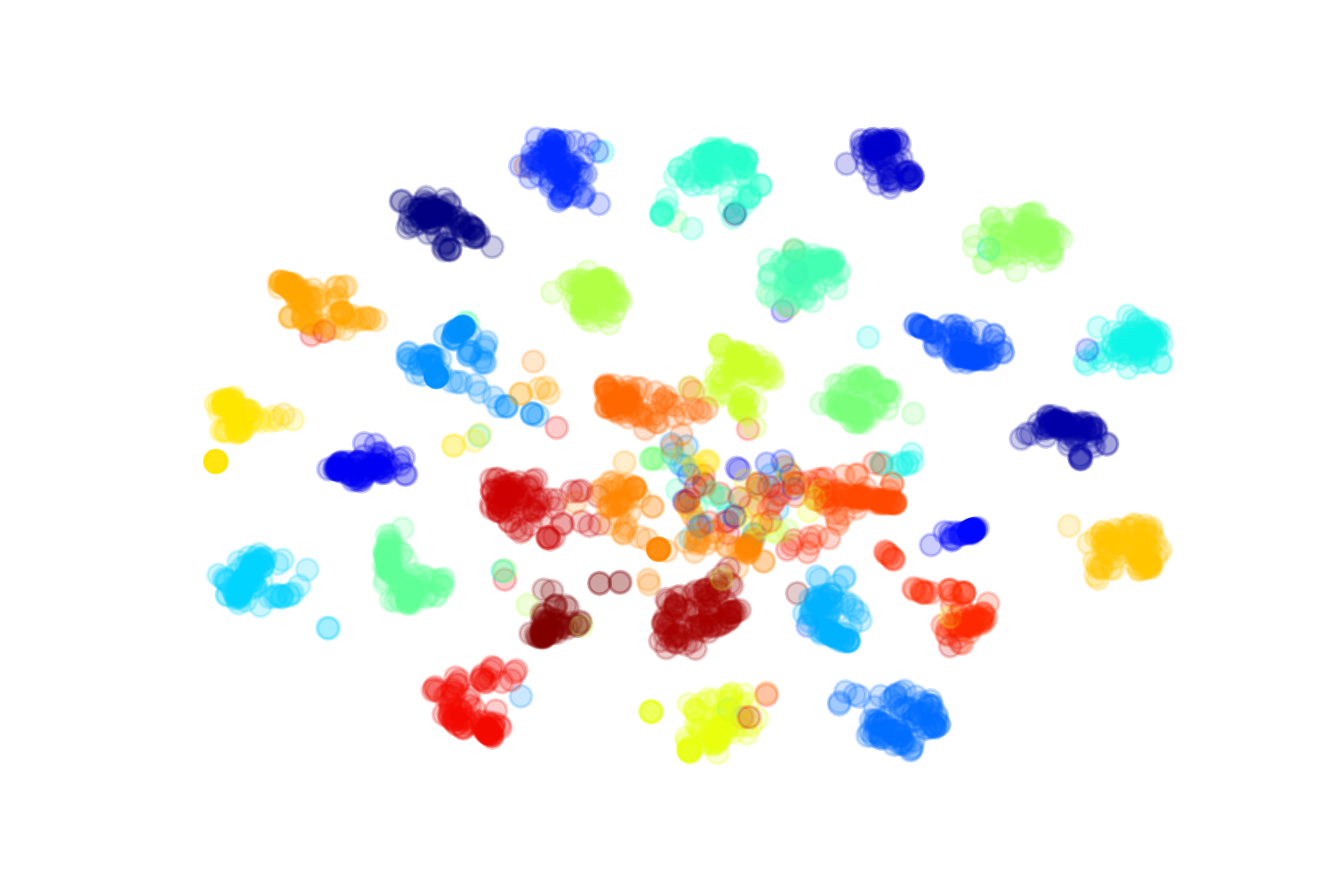} 
}
\subfigure[DWMD: \emph{Target}=D] { \label{fig:4} 
\includegraphics[width=0.48\columnwidth]{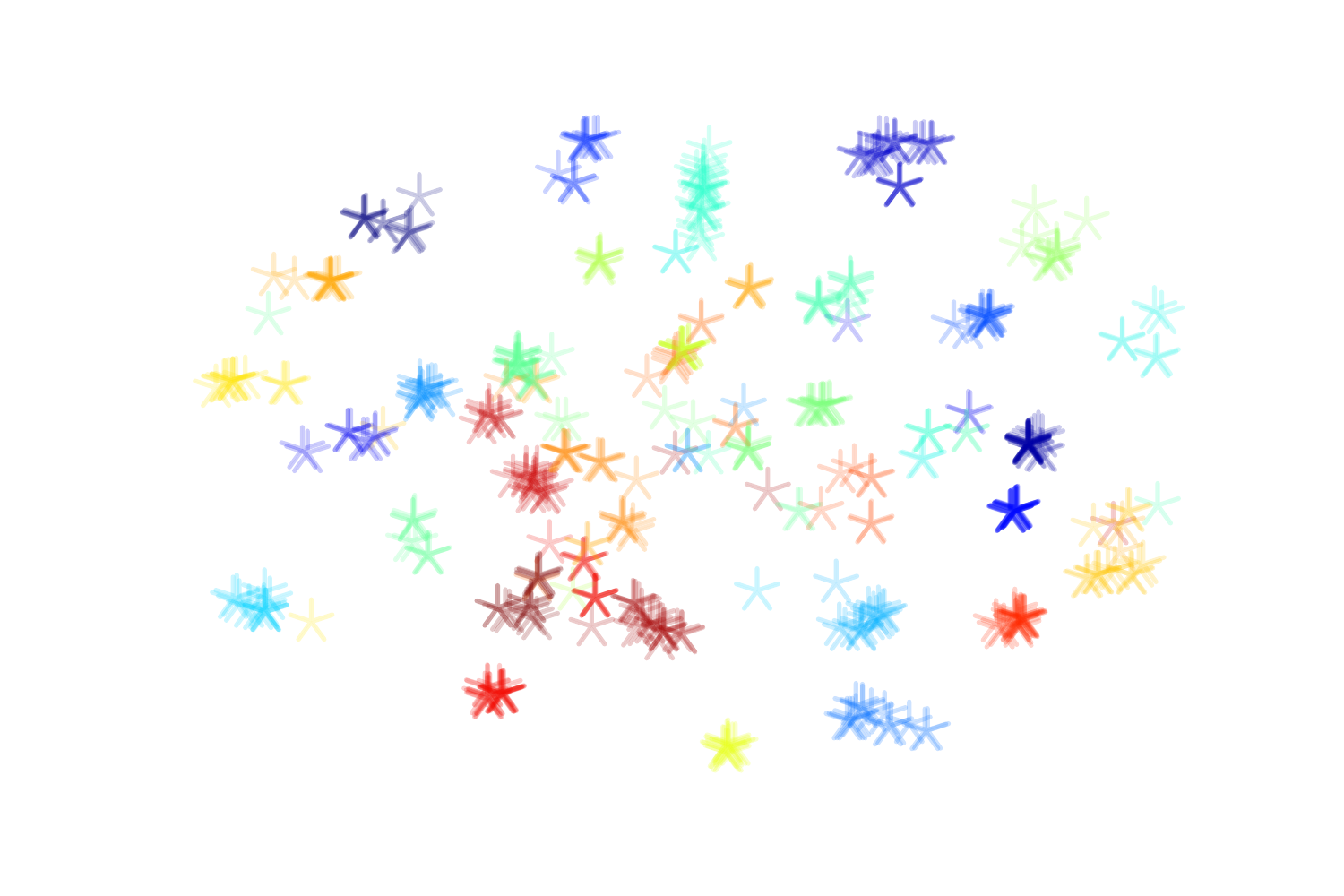} 
} 
\caption{The t-SNE visualization of hidden representations (ResNet50) generated by CMD (1)(2) and DWMD (3)(4) on \emph{Office-31}} 
\label{fig:t-SNE} 
\end{figure*}

\subsection{Results}
The classification results on the $\emph{Office-31}$ dataset for UDA are shown in Table ~\ref{table1} (Penalty parameter $\lambda = 1$, Positive constant $C = 0.05$,  $\beta = 1$, and moment order $n= 5$). The base neural networks chosen for the experiments are AlexNet and ResNet50. The proposed DWMD method outperforms all comparison methods on most transfer tasks, and reaches a total average classification accuracy of 75.5\% (AlexNet) and 82.7\% (ResNet) with only one hidden layer for representation matching. Specifically, our DWMD method promotes the classification results substantially in some hard transfer tasks, such as $A \rightarrow D$, $D \rightarrow A$ and $W \rightarrow A$, where the source and target domains are quite different, and also achieves higher performance on easy transfer tasks, e.g. $D \rightarrow W$ and $W \rightarrow D$. From the above results, we can make the observation that even one hidden layer representation matching model with the DWMD domain regularizer can be effective.
\\
\\
The results on the more domain size balanced dataset $\emph{ImageCLEF-DA}$ are reported in Table ~\ref{table2} (Penalty parameter $\lambda = 1$, Positive constant $C = 0.1$,  $\beta = 1$, and moment order $n= 5$). The DWMD method substantially outperforms other compared approaches on most transfer tasks. However, we can see that smaller improvements are presented. Our interpretation for this is that since the dataset $\emph{ImageCLEF-DA}$ is visually more similar among categories than $\emph{Office-31}$, less shift is generated during the transfer tasks, which alleviates difficulties in domain adaptation.
\\
\\
The unsupervised domain adaptation results on \emph{Office-10} transfer tasks are shown in Table ~\ref{table3} (Penalty parameter $\lambda = 1$, Positive constant $C = 0.05$,  $\beta = 1$, and moment order $n= 5$). This group of experiments aims to directly compare the effectiveness of different distribution metrics. We can observe that DWMD outperforms the comparison metrics on nearly all transfer tasks, and achieves significant improvement in two hard transfer tasks: $A \rightarrow W$ and $A \rightarrow D$. From the results, the observation can be made that, in hidden representation matching models, DWMD metric is more accurate in describing distribution discrepancy information than CMD, MMD metrics.

\section{Analysis}
\subsection{Feature Visualization}
To better illustrate the transferability of DWMD method, we visualize the hidden representations learned by CMD and DWMD respectively on transfer tasks $A$ and $D$ (\emph{Office-31}) in figure ~\ref{fig:t-SNE} using t-SNE embeddings ~\cite{donahue2014decaf}. According to the results, we can make the observation that DWMD is more powerful than CMD in UDA scenario. 

\subsection{Analysis of Parameter Sensitivity}
In this subsection, we investigate the hyperparameters $C$, $\beta$ and the moment order $n$ in the DWMD function. For the experiments on hyperparameters $C$ and $\beta$, we use \emph{Office-31} as testing dataset and the order of moment is fixed to 5. Figure~\ref{fig:C} demonstrates the total average transfer accuracy based on both AlexNet and ResNet50 with $\beta = 1$ by varying $C \in \left\{0.01, 0.03, 0.05, 0.07, 0.1, 0.5, 1.0 \right\}$. The two curves in the line chart are bell-shaped as the accuracies remain stable at first and then decrease slightly when $C$ gets bigger, which confirms that the hyperparameter $C$ in our metric has stability. As for hyper-parameter $\beta$, Figure~\ref{fig:Beta} shows the specific results based on ResNet50 with $C = 0.05$ on hard transfer tasks $A \rightarrow W$, $A \rightarrow D$, $D \rightarrow A$, and $W \rightarrow A$ by having $\beta \in \left\{0.5, 0.8, 1 \right\}$. The closeness and little fluctuations with respect to accuracies for specific transfer tasks in this bar chart reveal the stability of hyperparameter $\beta$ in our function.

\begin{figure*} 
\subfigure[Stability Analysis on Hyperparameter C] { \label{fig:C} 
\includegraphics[width=0.5\columnwidth]{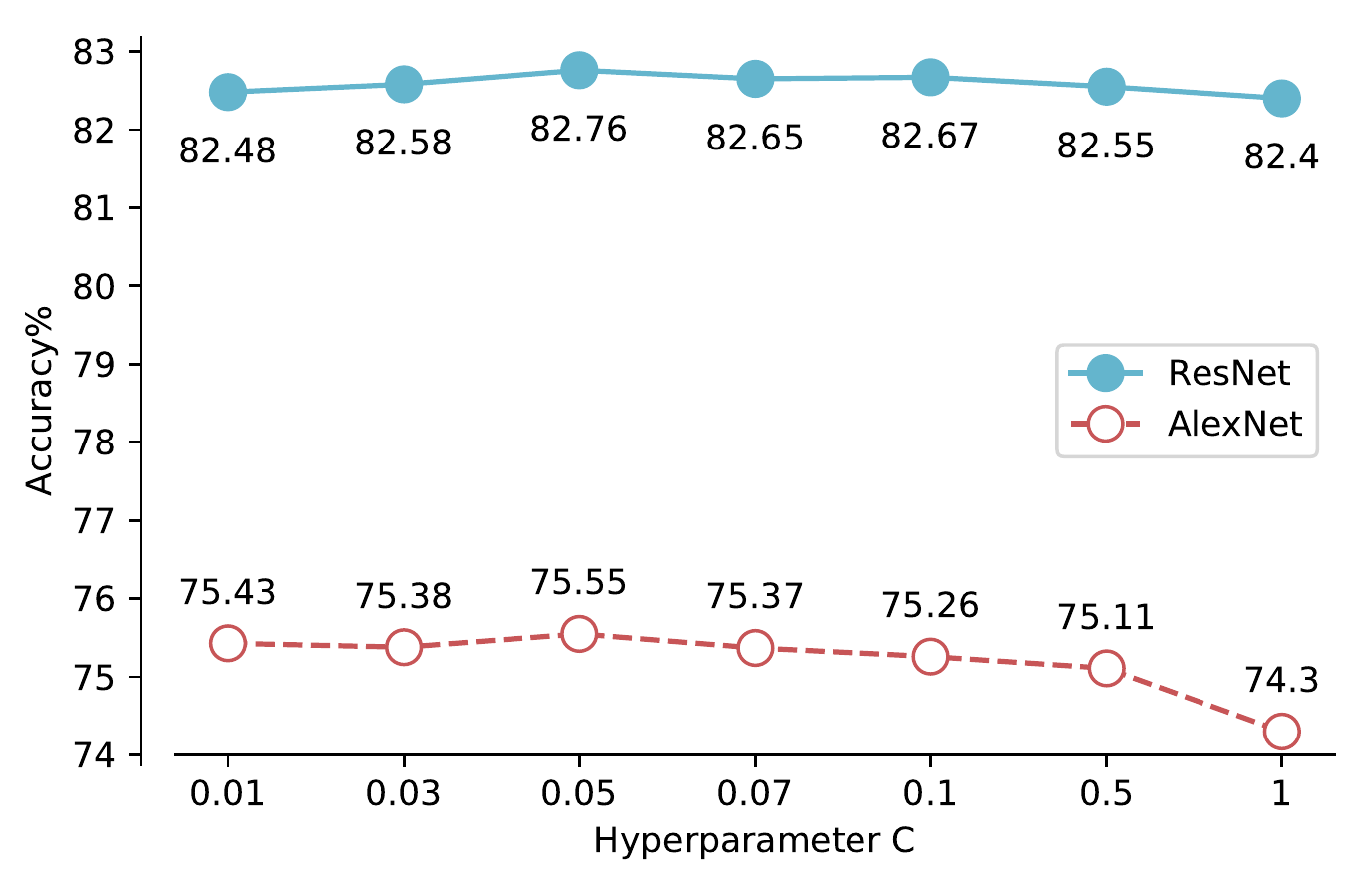} 
} 
\subfigure[Stability Analysis on Hyperparameter $\beta$] { \label{fig:Beta} 
\includegraphics[width=0.5\columnwidth]{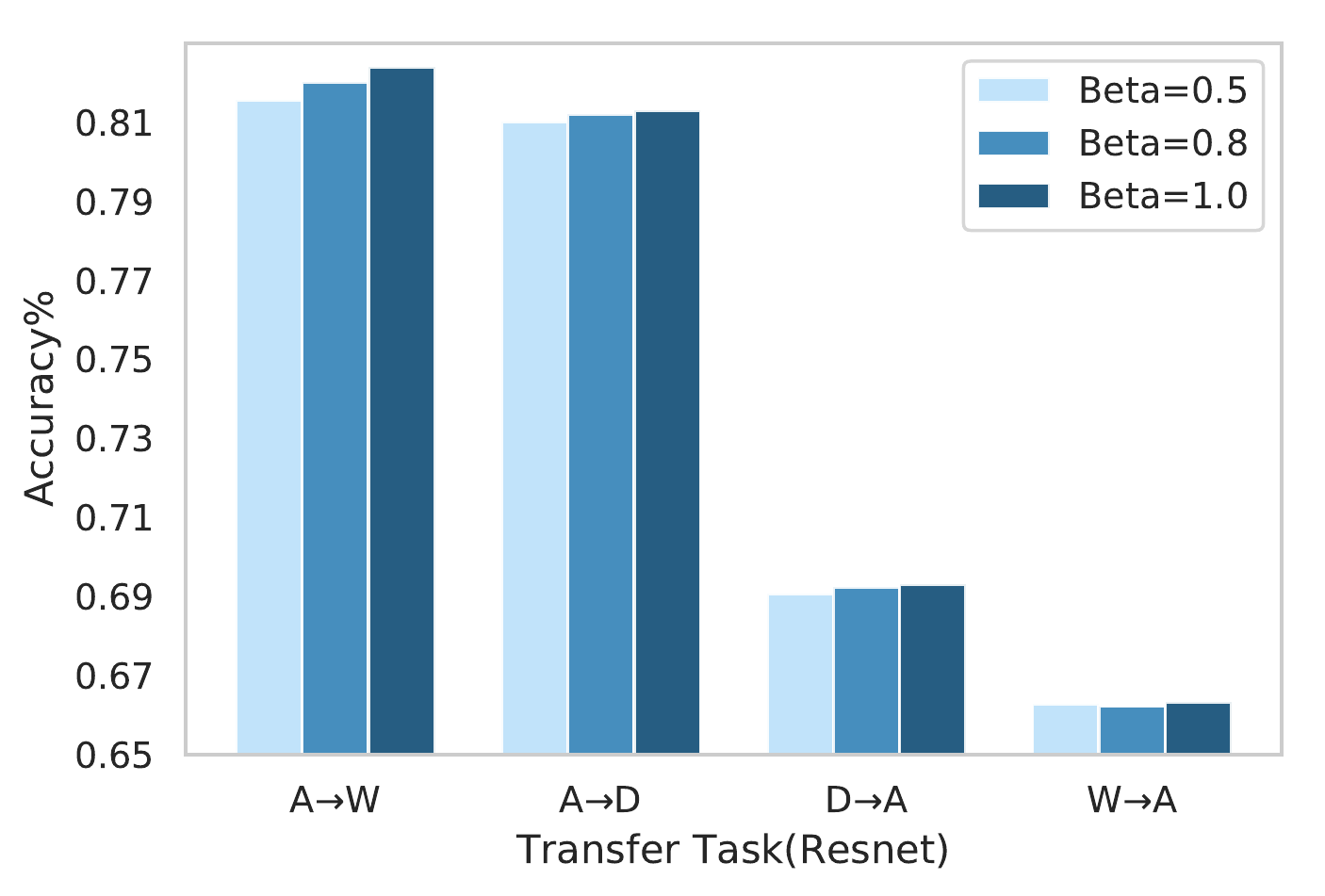} 
}  
\end{figure*}

\begin{table*}
\centering
\caption{Classification accuracy (\%) on \emph{Office-31} dataset applying DWMD with different moment order $n$ (AlexNet and ResNet50)}
\begin{tabular}{clllllll}
\toprule[2pt]
order\text{   }$n$     & \text{   }A$\rightarrow$W &\text{   } D$\rightarrow$W & \text{   }W$\rightarrow$D &\text{   } A$\rightarrow$D &\text{   } D$\rightarrow$A & \text{   }W$\rightarrow$A & Avg  \\
\midrule
$n = 2 / AlexNet$   & 64.8 $\pm$ 0.3  & 95.7 $\pm$ 0.4  & 99.0 $\pm$ 0.3  & 67.8 $\pm$ 0.2  & 52.9 $\pm$ 0.1  & 51.0 $\pm$ 0.2  & 71.9 \\
$n = 3 / AlexNet$  &  67.2 $\pm$ 0.6  &  96.2 $\pm$ 0.6       &   99.2 $\pm$ 0.3   &  68.0 $\pm$ 0.7   & 54.5 $\pm$ 0.6   & 52.6 $\pm$ 0.1    &   73.0   \\
$n = 5 / AlexNet$      &  \textbf{73.3} $\pm$ 0.1  &  \textbf{96.8} $\pm$ 0.2   &  99.6 $\pm$ 0.1  & \textbf{71.5} $\pm$ 0.4 &  \textbf{57.1} $\pm$ 0.3   & \textbf{54.8} $\pm$ 0.4   &   \textbf{75.5}   \\
$n = 10 / AlexNet$      &  72.2 $\pm$ 0.4  &  96.4 $\pm$ 0.0   & \textbf{99.8} $\pm$ 0.0  &  70.4 $\pm$ 0.6 &  56.2 $\pm$ 0.4   &  54.3 $\pm$ 0.3   &   74.9   \\
$n = 20 / AlexNet$      &  72.6 $\pm$ 0.6  &  96.7 $\pm$ 0.6   &  \textbf{99.8} $\pm$ 0.2  &  70.1 $\pm$ 0.8 &  56.0 $\pm$ 0.5   &  54.0 $\pm$ 0.5   &   74.9   \\
\midrule
$n = 2 / ResNet$   & 80.4 $\pm$ 0.1  & 97.2 $\pm$ 0.0  & 99.6 $\pm$ 0.1  & 80.7 $\pm$ 0.4  & 65.2 $\pm$ 0.2  & 62.9 $\pm$ 0.2  & 81.0 \\
$n = 3 / ResNet$  &  81.4 $\pm$ 0.0  &  97.3 $\pm$ 0.0       &   99.6 $\pm$ 0.0   &  81.1 $\pm$ 0.3   & 66.4 $\pm$ 0.5   & 64.2 $\pm$ 0.3    &   81.7   \\
$n = 5 / ResNet$  &  82.1 $\pm$ 0.6  &  \textbf{97.9} $\pm$ 0.1   &  99.6 $\pm$ 0.1 &  \textbf{81.3} $\pm$ 0.6 &  \textbf{69.2} $\pm$ 0.5   &  \textbf{66.2} $\pm$ 0.1  &   \textbf{82.7}   \\
$n = 10 / ResNet$      &  \textbf{82.9} $\pm$ 0.6  &  97.6 $\pm$ 0.3   &  99.6 $\pm$ 0.0  &  81.1 $\pm$ 0.9 &  67.6 $\pm$ 0.6   &  65.9 $\pm$ 0.6   &   82.5   \\
$n = 20 / ResNet$      &  81.0 $\pm$ 0.3  &  97.6 $\pm$ 0.2   &  \textbf{99.8} $\pm$ 0.0  &  80.9 $\pm$ 0.9 &  67.0 $\pm$ 0.4   &  66.0 $\pm$ 0.4   &   82.1   \\
\bottomrule[2pt]
\end{tabular}
\label{table4}
\end{table*}

\begin{figure*} 
\subfigure[Results on \emph{ImageCLEF-DA} Dataset] { \label{fig:a} 
\includegraphics[width=0.5\columnwidth]{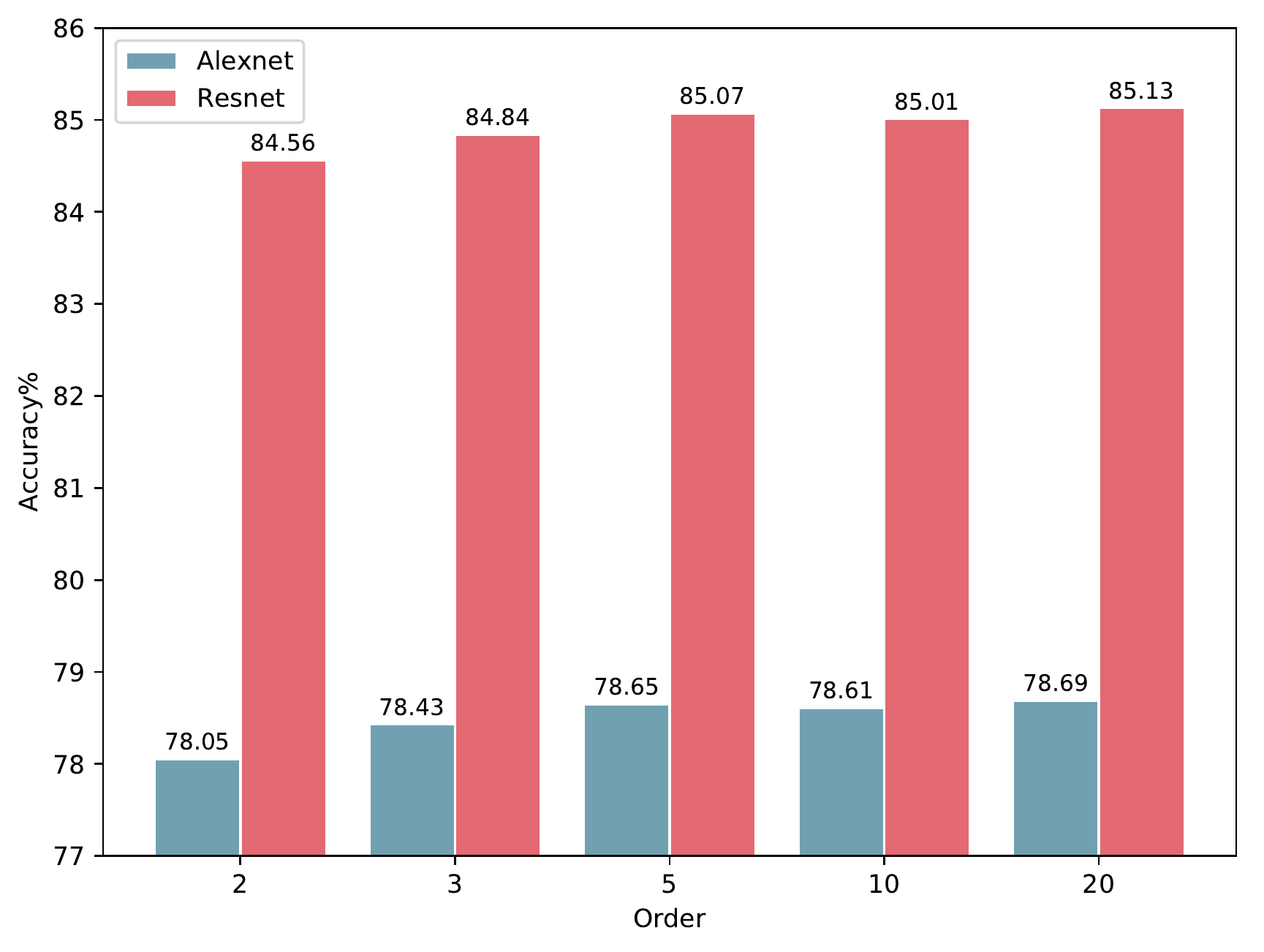} 
} 
\subfigure[Results on \emph{Office-10} Dataset] { \label{fig:b} 
\includegraphics[width=0.5\columnwidth]{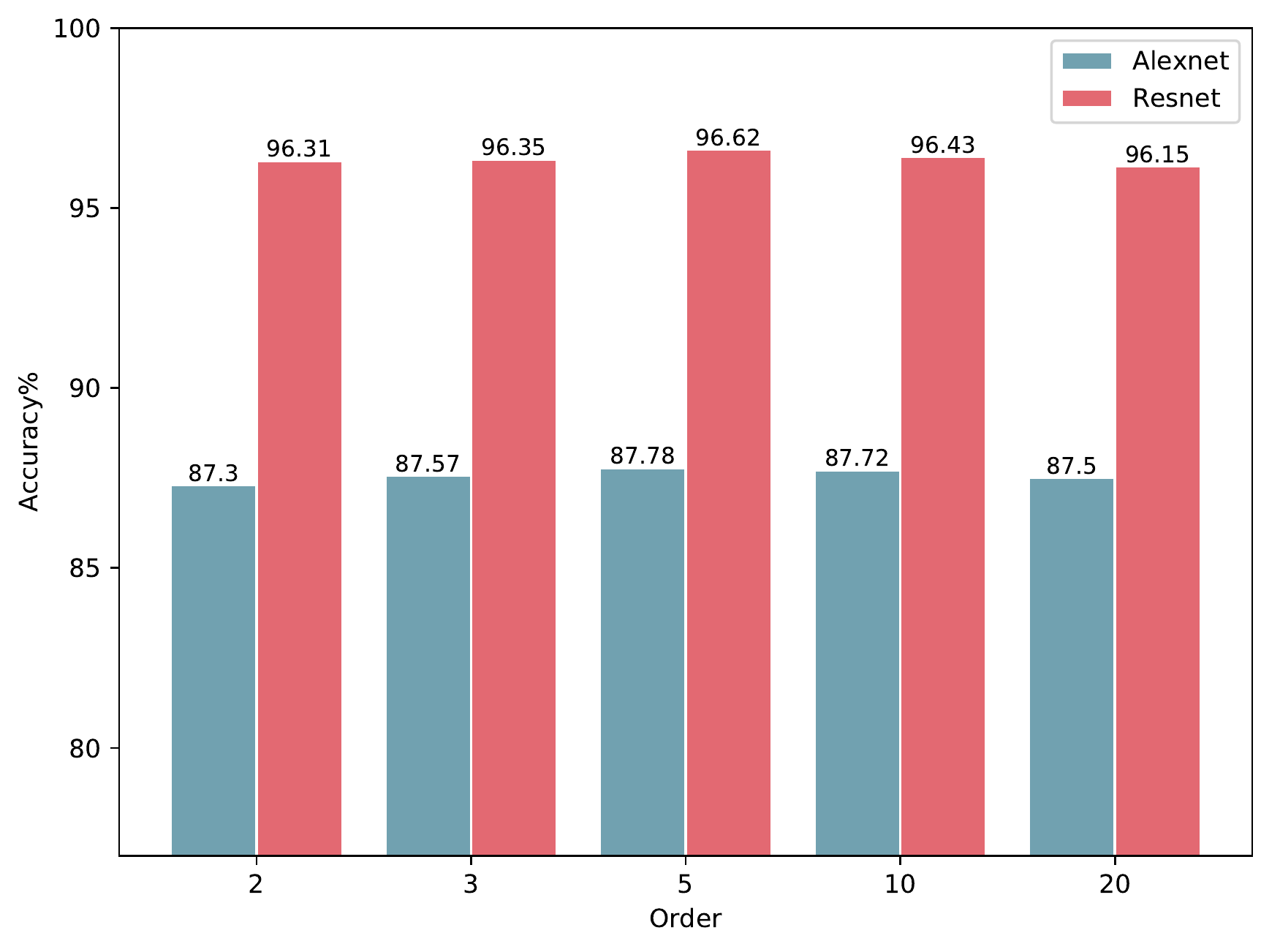} 
} 
\caption{Classification accuracy (\%) on two datasets applying DWMD with different moment order $n$ (AlexNet and ResNet50)} 
\label{fig:moment_order} 
\end{figure*}

Although obtaining an accurate form of DWMD requires $n$ goes to $+\infty$, we set $n$ to a finite positive integer in practice. Comprehensive experiments are carried out to check the sensitivity of moment order $n$ by fixing $\beta = 1$ and $C$ ($C = 0.05$ for \emph{Office-31} and \emph{Office-10}; $C = 0.1$ for \emph{ImageCLEF-DA}). The classification accuracy results on three datasets for UDA based on AlexNet and ResNet are shown in Table ~\ref{table4} and Figure ~\ref{fig:moment_order}. $\left\{ 2, 3, 4, 10, 20 \right\}$ is a set of all testing values for moment order $n$. As revealed from the graphs that the classification accuracy will significantly increase if we set a larger moment order $n$ in the beginning. This encouraging results highlight the importance of using high order moment alignment in UDA. Another surprising fact we found is that we actually need not to set $n$ to a sufficient large positive integer. Instead, assigning a small positive integer like 5 to the moment order $n$ is enough. We provide our explanation: When $n$ increases, the scaleable term $e^{\omega}$ will shrink to 0 at exponential speed. Thus, if $n$ is sufficiently large (for example, $n$ is larger than $10$), the high order moment term in the DWMD function will actually have minimal impact for hidden representation matching. Also, the sample noise might mislead the training direction when the series have two many terms. Therefore, according to the experimental results, we recommend to set order moment $n$ to a positive integer between $5$ and $10$. This can free us from the worry that training an UDA model with DWMD might be time-consuming. Beside, for the results on \emph{Office-31}, we can see that when we increase the moment order $n$ from $2$ to $5$, the average classification accuracy improves $3.6\%$ when choosing AlexNet as the feature extraction model. However, we get a much smaller accuracy improvement of $1.7\%$
when it comes to ResNet50. The similar phenomenon appear in hard transfer tasks: $A \rightarrow W$, $W \rightarrow A$, $A \rightarrow D$, and $D \rightarrow A$. These results lead to an insightful observation: the DWMD function can be really effective in UDA even when the feature representations obtained are not good enough.

\begin{table*}
\centering
\caption{Comparison between DWMD and SMD}
\begin{tabular}{|l|l|l|l|l|l|l|}
\hline
\multirow{3}{*}{$\qquad$\textbf{Method}} & \multicolumn{6}{l|}{$\qquad \qquad \qquad \qquad $\textbf{Average Accuracy} (\%)}                                                                                                                                       \\ \cline{2-7} 
                        & \multicolumn{2}{l|}{$\qquad$\textbf{\emph{Office-31}}} & \multicolumn{2}{l|}{$\quad$\textbf{\emph{ImageCLEF-DA}}} & \multicolumn{2}{l|}{$\qquad$\textbf{\emph{Office-10}}} \\ \cline{2-7} 
                        & AlexNet                   & ResNet50                   & AlexNet                     & ResNet50                    & AlexNet                   & ResNet50                   \\ \hline
SMD (order=2)           & \text{  }\text{  }71.68                     & $\quad$80.92                      &  \text{  }\text{  }77.89                       & $\quad$84.50                       & \text{  }\text{  }87.13                     & $\quad$96.25                      \\ \hline
DWMD (order=2)          & \text{  }\text{  }71.87                     & $\quad$81.00                      & \text{  }\text{  }78.05                       & $\quad$84.56                       &  \text{  }\text{  }87.30                     & $\quad$96.31                      \\ \hline
SMD (order=5)           & \text{  }\text{  }75.15                     & $\quad$82.56                      & \text{  }\text{  }78.57                       & $\quad$85.01                       &  \text{  }\text{  }87.53                     & $\quad$96.57                      \\ \hline
DWMD (order=5)          & \text{  }\text{  }75.52                     & $\quad$82.72                      & \text{  }\text{  }78.65                       & $\quad$85.07                       &  \text{  }\text{  }87.78                     & $\quad$96.62                      \\ \hline
SMD (order=10)          & \text{  }\text{  }74.42                     & $\quad$82.41                      & \text{  }\text{  }78.54                       & $\quad$85.05                       &  \text{  }\text{  }87.52                     & $\quad$96.41                      \\ \hline
DWMD (order=10)         & \text{  }\text{  }74.88                     & $\quad$82.45                      & \text{  }\text{  }78.61                       & $\quad$85.01                       &  \text{  }\text{  }87.72                     & $\quad$96.43                      \\ \hline
\end{tabular}
\label{table5}
\end{table*}

\subsection{The effectiveness of Dimensional Weights}
We demonstrate the effectiveness of dimensional weights by introducing a new metric termed Scaleable Orderwise Moment Discrepancy (SMD), which is similar to DWMD but with dimensional weighted vector removed. In SMD metric, we replace the dimensional weighted vector ($\tilde \tau \left( X_{S}, X_{T} \right)$) with a constant vector $\tau_{c}$ and the constant is defined as the average of the sum of each vector component, i.e. $\tau_{c} = \frac{1}{d} \sum \limits_{i = 1}^{d} \tilde{\tau}^{i} \left( X_S, X_T \right)$. The comparisons between DWMD and SMD are provided in Table ~\ref{table5}. Results indicate that dimensional weighted vector can better align those feature dimensions that have larger discrepancy. Thus, introducing dimensional weighted are more effective in bridging cross-domain discrepancy.

\begin{table}
\centering
\caption{Representation Matching Network Structure}
\begin{tabular}{|c|l|l|l|l|l|l|}
\hline
\multicolumn{7}{|c|}{Number of Hidden Representation Matching Layer (HRML) \& Activation Function} \\ \hline
\multicolumn{7}{|c|}{2 HRMLs: 1 layer (1024 nodes, ReLU) + 1 layer (256 nodes, Sigmoid)}           \\ \hline
\multicolumn{7}{|c|}{3 HRMLs: 2 layers (1024, 512 nodes, ReLU) + 1 layer (256 nodes, Sigmoid)}     \\ \hline
\multicolumn{7}{|c|}{4 HRMLs: 3layers (1024, 512, 256 nodes, ReLU) + 1 Layer (256 nodes, Sigmoid)} \\ \hline
\end{tabular}
\label{table6}
\end{table}

\begin{table}
\centering
\caption{Classification accuracy (\%) with unbounded activation functions (Base model: ResNet50)}
\begin{tabular}{clllllll}
\toprule[2pt]
\multicolumn{8}{c}{\emph{Office-31} Dataset}\\
\midrule[2pt]
Methods     & \text{   }A$\rightarrow$W &\text{   } D$\rightarrow$W & \text{   }W$\rightarrow$D &\text{   } A$\rightarrow$D &\text{   } D$\rightarrow$A & \text{   }W$\rightarrow$A & Avg  \\
\midrule
CMD (2 HRMLs)   &  81.6 $\pm$ 0.3  & 97.0 $\pm$ 0.2  & 99.2 $\pm$ 0.0 & 80.7 $\pm$ 0.3  & 67.3 $\pm$ 0.3 & 65.2 $\pm$ 0.5 & 81.8 \\
DWMD (2 HRMLs) & \textbf{82.6} $\pm$ 0.1 & \textbf{97.9} $\pm$ 0.3 & \textbf{99.6} $\pm$ 0.0 & \textbf{82.0} $\pm$ 0.1 & \textbf{67.4} $\pm$ 0.5 & \textbf{65.3} $\pm$ 0.0 & \textbf{82.5} \\
\midrule
CMD (3 HRMLs) & 82.0 $\pm$ 0.1 & 97.4 $\pm$ 0.2 & 99.4 $\pm$ 0.0 & 82.1 $\pm$ 0.1 & 66.9 $\pm$ 0.2 & 64.3 $\pm$ 0.4 & 82.0 \\
DWMD (3 HRMLs) & \textbf{83.9} $\pm$ 0.2 & \textbf{97.7} $\pm$ 0.1 & \textbf{99.6} $\pm$ 0.0 & \textbf{83.1} $\pm$ 0.2 & \textbf{68.8} $\pm$ 0.3 & \textbf{67.0} $\pm$ 0.3 & \textbf{83.4} \\
\midrule
CMD (4 HRMLs) & 82.8 $\pm$ 0.0 & 96.2 $\pm$ 0.2 & 99.2 $\pm$ 0.0 & 81.3 $\pm$ 0.5 & 66.9 $\pm$ 0.4 & 63.8 $\pm$ 0.4 & 81.7 \\
DWMD (4 HRMLs) & \textbf{85.6} $\pm$ 0.2 & \textbf{97.4} $\pm$ 0.3 & \textbf{99.6} $\pm$ 0.0 & \textbf{82.6} $\pm$ 0.6& \textbf{68.3} $\pm$ 0.6 & \textbf{66.6} $\pm$ 0.4 & \textbf{83.4} \\
\midrule[2pt]
\multicolumn{8}{c}{\emph{ImageCLEF-DA} Dataset}\\
\midrule[2pt]
Method  & \text{   }I $\rightarrow$ P & \text{   }P $\rightarrow$ I & \text{   }I $\rightarrow$ C & \text{   }C $\rightarrow$ I & \text{   }C $\rightarrow$ P & \text{   }P $\rightarrow$ C & Avg  \\
\midrule
CMD (2 HRMLs) & 75.7 $\pm$ 0.1 & 86.9 $\pm$ 0.1 & 94.6 $\pm$ 0.0 & 85.1 $\pm$ 0.3 & \textbf{72.5} $\pm$ 0.4 & 94.0 $\pm$ 0.1 & 84.8 \\
DWMD (2 HRMLs) & \textbf{77.2} $\pm$ 0.1 & \textbf{87.4} $\pm$ 0.2 & \textbf{95.3} $\pm$ 0.0 & \textbf{86.1} $\pm$ 0.1 & 72.1 $\pm$ 0.3 & \textbf{94.7} $\pm$ 0.2 & \textbf{85.5} \\
\midrule
CMD (3 HRMLs) & 76.2 $\pm$ 0.0 & 86.8 $\pm$ 0.2 & 94.1 $\pm$ 0.0 & 86.1 $\pm$ 0.1 & 72.6 $\pm$ 0.1 & 93.2 $\pm$ 0.0 & 84.8 \\
DWMD (3 HRMLs) & \textbf{76.4} $\pm$ 0.0 & \textbf{87.9} $\pm$ 0.1 & \textbf{95.9} $\pm$ 0.1 & \textbf{88.2} $\pm$ 0.0 & \textbf{73.3} $\pm$ 0.2 & \textbf{95.4} $\pm$ 0.0 & \textbf{86.2} \\
\midrule
CMD (4 HRMLs) & 75.5 $\pm$ 0.5 & 86.0 $\pm$ 0.5 & 94.7 $\pm$ 0.3 & 85.4 $\pm$ 0.9 & 73.1 $\pm$ 0.3 & 93.4 $\pm$ 0.1 & 84.7 \\
DWMD (4 HRMLs) & \textbf{77.2} $\pm$ 0.3 & \textbf{88.1} $\pm$ 0.5 & \textbf{95.3} $\pm$ 0.4 & \textbf{88.4} $\pm$ 0.5 & \textbf{74.0} $\pm$ 0.4 & \textbf{95.0} $\pm$ 0.2 & \textbf{86.3} \\
\bottomrule[2pt]
\end{tabular}
\label{table7}
\end{table}

\subsection{Generalize to Unbounded Activation Functions}
As mentioned before, our DWMD metric improves the CMD by alleviating its compact interval data distribution assumption. We examine this fact on the basis of the metric performance on rebuilt multi hidden representation matching layers (HRMLs) with unbounded activation functions. We experiment two metrics with different numbers of HRMLs. The details of each representation matching neural network are provided in Table ~\ref{table6}. Since ReLU activation function is introduced in the representation matching network, the assumption that hidden activations are bounded might be violated. This claim is testified by the results reported in Table~\ref{table7} (Penalty parameter $\lambda = 1$, Positive constant $C = 0.05$ for \emph{Office-31}, $C = 0.1$ for \emph{ImageCLEF-DA},  $\beta = 1$, and moment order $n= 5$). When we apply unbounded activation function ReLU, the DWMD method will significantly outperform the CMD metric. As the number of HRMLs increases, the compact interval data distribution assumption of the CMD method is more likely to fail, which result in larger gaps in accuracy comparing with the DWMD method. This reveals the superiority of the DWMD metric over the CMD metric.

\section{Conclusion}
This paper proposed a novel moment-based probability distribution metric termed dimensional weighted orderwise moment discrepancy (DWMD) for feature representation matching in a UDA scenario. Unlike previous matching methods, all order moments are explicitly aligned vertically in our metric function, taking the form of a series, and the dimensional weighted vector reflecting discrepancy in each dimension horizontally is considered. Furthermore, we also compute the error bound of our metric when using its empirical estimate. Comprehensive experiments on benchmark datasets demonstrated the efficacy of the proposed approach. 

\section{Acknowledgment}
This research was partially supported by "The Fundamental Theory and Applications of Big Data with Knowledge Engineering" under the National Key Research and Development Program of China with Grant No. 2018YFB1004500, the MOE Innovation Research Team No. IRT17R86, the National Science Foundation of China under Grant Nos. 61721002 and 61532015, and Project of SERVYOU-XJTU Joint Innovation Center of Big Tax Data. I would like to thank Professor Limin Li for her constructive comments on this paper.

\bibliographystyle{unsrt}  
\bibliography{template}  

\end{document}